\documentclass{article}

\usepackage{iclr2027_conference,times}
\iclrfinalcopy

\usepackage[T1]{fontenc}
\usepackage[utf8]{inputenc}
\usepackage{amsmath,amssymb,amsthm,mathtools}
\usepackage{graphicx,booktabs,tabularx,longtable,array,microtype}
\usepackage{placeins}
\usepackage{xurl,hyperref}

\hypersetup{
  hidelinks,
  pdfauthor={Shunye Wang, Haochen Wen, Shuo Li Liu, Xuanyi Wang, Lihao Liu, Zhongying Deng},
  pdftitle={Transferability of Learned States in Neural PDE Solvers}
}

\newcommand{\E}{\mathbb E}
\newcommand{\norm}[1]{\left\lVert#1\right\rVert}
\newcommand{\state}{\phi}
\newcommand{\control}{\phi_0}
\newcommand{\contract}{\mathcal C}
\newcommand{\deploy}{\mathcal A}

\newtheorem{proposition}{Proposition}
\newtheorem{corollary}[proposition]{Corollary}

\newcolumntype{Y}{>{\raggedright\arraybackslash}X}

\title{Transferability of Learned States in Neural PDE Solvers}

\author{
Shunye Wang$^1$ \quad
Haochen Wen$^2$ \quad
Shuo Li Liu$^3$ \quad
Xuanyi Wang$^4$ \quad
Lihao Liu$^5$ \quad
Zhongying Deng$^2$\\
$^1$ University of Chicago \quad
$^2$ University of Cambridge \quad
$^3$ Princeton University\\
$^4$ ByteDance Inc. \quad
$^5$ Amazon
}

\begin{document}

\maketitle
\lhead{Preprint}

\begin{abstract}
Assessing useful reuse in neural PDE solvers is challenging: final accuracy can reflect source learning and target-time computation. Our reuse contract separates solution accuracy, learning contribution, and numerical utility through paired state comparisons, matched target information and budgets, and cost accounting. A literature audit extracts 18 version-specific protocol records from 12 papers, documenting retained states, target-time resources, and reported controls. For a fixed linear system and residual tolerance, we construct two initial guesses with identical solution-error, energy-error, and residual norms, reaching the same solution with different conjugate-gradient (CG) iteration counts. Across 240 source-training trajectories, two linear PDE families, Fourier neural operators and convolutional networks, a fixed predictor's benefit reverses across correction algorithms. Among pairs with both relative prediction errors $\leq5\%$ on 64 in-distribution tasks ($63\times63$ interior grids), reductions in all three norms accompany more CG iterations, at mean taskwise rates of 23.5\% and 23.9\% in two libraries. Work-based selection saves 2.50--3.33 CG iterations on held-out in-distribution tasks; matched adaptation demonstrates finite-budget pretraining value. Independent batches confirm a 0.73\% complete online saving for one physics-trained Fourier neural operator against zero-initialized Poisson-preconditioned CG. Reuse requires matched state comparisons and downstream computational evidence.
\end{abstract}

\section{Introduction}
\label{sec:intro}
Choosing a reusable neural PDE solver requires more than identifying the most accurate predictor. A deployed model may supply an initial field, undergo target adaptation, or be called repeatedly during correction. The practical decision is whether retaining its learned state improves the intended computation relative to a specified alternative. Initial prediction error and final acceptance describe different aspects of that computation; neither alone establishes its benefit. We ask which judgments about learning remain supported when the corrector, comparison state, target information, and cost boundary are made explicit.

Physics-informed neural networks (PINNs) use equations in learning; neural operators and pretrained solvers retain information across tasks \citep{raissi2019,fno2020,pino2021}. Equations can also enter target-time adaptation and acceptance. Consequently, final accuracy alone cannot separate source learning from work performed on the target problem. Conversely, eventual convergence from an unlearned start does not negate a finite-budget contribution from learning. Our controlled population: supervised and residual-augmented FNO/CNN initializations, label-based target adaptation, and two explicitly scoped native-interface checks. 

This question has direct precedents. Solver-aware warm-start training optimizes downstream losses; hybrid solvers exploit spectral complementarity; numerical audits expose weak baselines and inconsistent costs \citep{sambharya2024,solverloop2020,hints2022,fcg2024,weakbaseline2024}. We study these issues through a controlled comparison that holds trained states fixed while changing the numerical interface, and keeps target selection information and permitted classical reuse explicit. The contribution is the resulting evidence about which judgments reverse, how the mismatch appears among accurate trained states, and which positive findings survive matched controls.

A complementary literature audit makes these distinctions explicit in 18 version-specific protocol records from 12 papers within a broader scoped catalogue, linking reported reuse claims to retained state, target-time resources, and comparison scope (Section~\ref{sec:contract}; Appendix~\ref{app:protocol-records}).

The main evidence connects three questions. \textbf{First, do accurate predictions preserve solver rankings?} A fixed-tolerance CG construction shows that even three initial error summaries together with the accepted output do not determine correction work. In two frozen model libraries, accurate trained states exhibit the corresponding ordering failure: strict improvement in all three summaries still accompanies more CG work, with mean taskwise reversal frequencies of 23.5\% and 23.9\% in the ID/n = 63, 5\%-error stratum. \textbf{Second, which comparison establishes a learning benefit?} Holding the predictor fixed can reverse the sign of its benefit across correctors; changing zero to coarse initialization changes another benefit judgment. Matched adaptation nevertheless retains finite-budget pretraining value. \textbf{Third, does a conditional advantage survive the deployment decision?} Work-based model selection improves some held-out ID comparisons, while independent batches confirm a small 0.73\% online gain under one frozen-state deployment. Other workloads favor classical reuse.

These results connect logical insufficiency, observed trained-state mismatch, and conditional deployment consequences. Executable evidence reconstructs the central comparisons, and exploratory and independent-confirmation cohorts remain separate.
\section{What contribution is being estimated?}
\label{sec:contract}
A task $\tau$ specifies the equation, coefficients, forcing, domain, and conditions. We write
\begin{equation}
 \widehat u_\tau=\deploy_B(\state,I_\tau),\qquad
 \contract=(Q,I,\deploy,B,e,V),
 \label{eq:contract}
\end{equation}
where $Q$ is the target distribution, $I$ the allowed target information, $\deploy$ the complete deployment, $B$ the resource budget, $e$ an independently evaluated error, and $V$ the acceptance rule. We display the compared retained states $(\state,\control)$ separately from the shared contract. Retained state may include weights, normalization, an anchor, a dictionary, an update rule, or executable code. Validation labels and inference-time observations belong to $I$; frozen weights do not imply observation-free prediction.

\paragraph{Correctness is an output claim.}
Does $\widehat u_\tau$ satisfy $V$? Compatible norms give $\norm{u-\widehat u}\leq\norm{L^{-1}}\norm{L\widehat u-f}$ for an invertible linear operator. Sampling, boundary treatment, stability, and discretization govern its interpretation. We distinguish true discrete residual, discrete-reference error, and continuum crosschecks.

\paragraph{Learning contribution is a paired state claim.}
For a specified source-independent control and matched target information,
\begin{equation}
 \Delta_e(B;\state,\control\mid\contract)
 =\E_{\tau\sim Q}\!\left[e(\deploy_B(\control,I_\tau),u_\tau)
                    -e(\deploy_B(\state,I_\tau),u_\tau)\right].
 \label{eq:effect}
\end{equation}
For correction, define $K_\epsilon(\state,\tau)$ as updates to the first verified residual acceptance; the analogous $\Delta_K=\E[K_\epsilon(\control,\tau)-K_\epsilon(\state,\tau)]$ measures update savings, not runtime. Such summaries require the declared common acceptance scope. The accepted-work quantity is undefined for a failed path; we report acceptance first and name any common accepted cohort used for a difference. Capped failures are never silently removed or assigned a successful time. The separate model-selection experiment declares an explicit penalized selection score; it is not a measured time-to-acceptance. Error comparisons at fixed finite budgets do not require both paths to have converged.

The control specifies the question. Zero and coarse numerical initializations test different warm-start contrasts. An identically parameterized scratch network tests source pretraining in adaptation; the frozen source model tests whether adaptation adds value. An arbitrary randomized learned updater need not be a competent replacement for a classical algorithm. A reset must include every source-dependent object, not just selected weights.

\paragraph{Numerical utility is a complete-system claim.}
A useful state contrast need not beat a competent alternative solver. We count the stated online boundary: construction, encoding, transfers, inference, setup, correction, verification, and discarded probes where applicable. Fixed-matrix workloads allow classical factorization and preconditioner reuse. For extra offline cost $C_0$, query count $N$, and matched expected query costs $c_L,c_C$, the cost difference is $C_0+N(c_L-c_C)$. Positive extra offline cost cannot be amortized if $c_L\geq c_C$ under that same workload.

\paragraph{Literature audit: scope and extraction.}
The literature component combines a 104-record, primary-source-checked scoping catalogue with 18 version-specific full-text protocol records from 12 papers. The catalogue search cutoff is 24 September 2026; searches combine method and evaluation terms with backward and targeted forward citation tracing. Each protocol record preserves a source version and locator, retained object, target-time process, reported controls, and evidence scope. Table~\ref{tab:audit-protocol-summary} gives representative distinctions; Appendix~\ref{app:protocol-records} retains all records, and Appendix~\ref{app:extended-literature} details the search boundary. Bibliographic coverage, protocol extraction, and executed experiments are counted separately.

\section{Why initial norms cannot determine correction work}
\label{sec:theory}
\paragraph{Initial losses are also incomplete summaries.}
For a fixed SPD system $Au^\star=b$ and initial error $e_0=u_0-u^\star$, define
\begin{equation}
 S_A(e_0)=\left(\norm{e_0}_2^2,\;e_0^\top Ae_0,\;\norm{Ae_0}_2^2\right).
 \label{eq:summary}
\end{equation}
These are squared solution error, twice the quadratic energy gap, and squared residual: for $\mathcal E(u)=\tfrac12u^\top Au-b^\top u$, $2(\mathcal E(u_0)-\mathcal E(u^\star))=e_0^\top Ae_0$. They connect to supervised, variational, and residual objectives in this fixed discrete setting. If $e_0=\sum_i c_iq_i$ and $Aq_i=\lambda_iq_i$, they retain only three moments of the spectral weights $c_i^2$. Classical exact-arithmetic CG instead satisfies
\begin{equation}
 \norm{e_k}_A^2=\min_{p\in\mathcal P_k,\,p(0)=1}
                 \sum_i\lambda_i c_i^2p(\lambda_i)^2,
 \label{eq:cg}
\end{equation}
where $\mathcal P_k$ contains polynomials of degree at most $k$ \citep{cg1952,saad2003,carson2024}.

\begin{proposition}[Equal summaries and equal accepted outputs, unequal work]
\label{prop:summary}
Fix $A=\operatorname{diag}(1,2,3,4)$, $b=(1,1,1,1)^\top$, exact-arithmetic CG, and acceptance $\norm{b-Au_k}_2/\norm b_2\leq10^{-2}$. Let
\[
 a=(1/\sqrt2,0,1/\sqrt2,0)^\top,\qquad
 v=(1/\sqrt3,1/\sqrt2,0,1/\sqrt6)^\top.
\]
Starting at $u^\star+a$ and $u^\star+v$ gives the same summary $S_A=(1,2,5)$ and the same returned solution $u^\star$, but acceptance takes two and three updates, respectively.
\end{proposition}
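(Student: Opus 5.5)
The plan is to verify the three claims (equal summaries, equal returned output, unequal acceptance counts) directly in the eigenbasis. Since $A$ is diagonal, the eigenvectors are the coordinate vectors, and an initial error $e_0$ has spectral weights $c_i^2=(e_0)_i^2$.

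\textbf{Equal summaries.} For $a$ the weights are $(1/2,0,1/2,0)$ on eigenvalues $1,2,3,4$. For $v$ they are $(1/3,1/2,0,1/6)$. The three entries of $S_A$ are the moments $\sum_i c_i^2\lambda_i^m$ for $m=0,1,2$. For $a$ these are $1$, $\tfrac12+\tfrac32=2$ and $\tfrac12+\tfrac92=5$. For $v$ they are $1$, $\tfrac13+1+\tfrac23=2$ and $\tfrac13+2+\tfrac{16}{6}=5$. Hence $S_A=(1,2,5)$ for both.

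\textbf{Termination and returned output.} In \eqref{eq:cg} only the eigenvalues in the support of $e_0$ matter: $\{1,3\}$ for $a$ and $\{1,2,4\}$ for $v$. With $d$ support points, the polynomial $\prod_j(1-x/\lambda_j)$ has degree $d$, satisfies $p(0)=1$, and vanishes on the support. So $e_d=0$, the residual is exactly zero at step $d$, and the accepted iterate is $u^\star$. This gives exact acceptance after $2$ updates for $a$ and $3$ for $v$.

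\textbf{Lower bounds on the earlier residuals.} It remains to show that the earlier iterates fail the test $\norm{r_k}_2\le 10^{-2}\norm b_2=0.02$. This is the main step.

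First, relate residual to energy error. We have $r_k=-Ae_k$ with spectral coefficients $-\lambda_ic_ip(\lambda_i)$, where $p$ is the CG polynomial. Hence
\[
\norm{r_k}_2^2=\sum_i\lambda_i^2c_i^2p(\lambda_i)^2\ \ge\ \lambda_{\min}\sum_i\lambda_ic_i^2p(\lambda_i)^2=\norm{e_k}_A^2,
\]
since $\lambda_{\min}=1$ on both supports. It therefore suffices to bound the minimum in \eqref{eq:cg} from below.

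Next, take $k=d-1$, with $d$ nodes and weights $w_i=\lambda_ic_i^2$. Any admissible $p$ is its own Lagrange interpolant on the nodes, so
\[
1=p(0)=\sum_i p(\lambda_i)L_i(0).
\]
Cauchy--Schwarz then gives
\[
\min\sum_i w_ip(\lambda_i)^2=\Bigl(\sum_i L_i(0)^2/w_i\Bigr)^{-1}.
\]

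For $a$ at $k=1$: the nodes are $1,3$, the weights are $(1/2,3/2)$, and $L(0)=(3/2,-1/2)$. The sum is $9/2+1/6=14/3$, so $\norm{r_1}_2^2\ge 3/14$.

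For $v$ at $k=2$: the nodes are $1,2,4$, the weights are $(1/3,1,2/3)$, and $L(0)=(8/3,-2,1/3)$. The sum is $64/3+4+1/6=51/2$, so $\norm{r_2}_2^2\ge 2/51$ and $\norm{r_2}_2\approx 0.198>0.02$.

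Smaller $k$ only enlarges the minimum in \eqref{eq:cg}, because the polynomial classes are nested. So the same bounds apply to $k=0$ (where $\norm{r_0}_2=\sqrt5$) and to $k=1$ for $v$.

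Combining the three parts: acceptance first occurs at exactly $2$ updates for $a$ and $3$ for $v$, both with returned solution $u^\star$. The only genuinely nontrivial check is the residual lower bound at step $d-1$, which the interpolation/Cauchy--Schwarz formula settles without running CG explicitly. As a fallback, one could run CG in rational arithmetic from $r_0=-Ae_0$.
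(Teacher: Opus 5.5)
Your proof is correct, but it settles the finite-tolerance part by a different route from the paper. The paper runs the exact CG recurrence, which is your stated fallback. It tabulates the squared residuals $(5,45/196,0)$ for $a$ and $(5,2/3,40/867,0)$ for $v$, then compares each entry with $4\times10^{-4}$.

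You avoid the recurrence entirely. Termination comes from the size of the spectral support. You then show the step-$(d-1)$ residual is large by chaining three facts:
\begin{itemize}
\item the comparison $\norm{r_k}_2^2\ge\lambda_{\min}\norm{e_k}_A^2$;
\item the optimality identity \eqref{eq:cg};
\item the Lagrange/Cauchy--Schwarz closed form $\bigl(\sum_i L_i(0)^2/w_i\bigr)^{-1}$ for the optimal energy with $d$ nodes.
\end{itemize}
This gives the lower bounds $3/14$ and $2/51$, which correctly sit below the exact values $45/196$ and $40/867$.

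The paper's computation buys exact sequences. These are reused verbatim for Corollary~\ref{cor:dominance} and checked mechanically by the supplied script.

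Your route buys a certificate that applies to any initial error with finite spectral support. It also upgrades the support-counting argument the paper gives in Appendix~\ref{app:spectral} into a statement about a finite tolerance; the paper explicitly limits that argument to exact termination. Finally, your bound scales by $t^2$ under $e_0\mapsto te_0$, so it yields the corollary at once. For $v/2$ it gives $\norm{r_2}_2^2\ge 1/102$, a relative residual of at least about $0.0495>10^{-2}$.
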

\noindent The squared residual sequences are $(5,45/196,0)$ and $(5,2/3,40/867,0)$; $\norm b_2^2=4$. Thus even the pair of initial summaries and accepted output does not determine work for this fixed problem and corrector.

\begin{corollary}[Strictly better in all three summaries need not mean less work]
\label{cor:dominance}
Under the same system, corrector, and tolerance, replacing $v$ by $v/2$ gives $S_A(v/2)=(1/4,1/2,5/4)$, strictly below $S_A(a)$ componentwise, but still requires three rather than two updates.
\end{corollary}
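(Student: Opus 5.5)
The plan is to reduce the corollary to Proposition~\ref{prop:summary} using two scaling facts. The first is that every entry of $S_A$ is a quadratic form in $e_0$. The second is that exact-arithmetic CG is scale-equivariant in the initial error. We then re-check the acceptance test, which is normalized by $\norm b_2$ rather than by the initial residual.

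\textbf{Summaries.} Each component of \eqref{eq:summary} is homogeneous of degree two in $e_0$. Hence $S_A(v/2)=\tfrac14 S_A(v)=\tfrac14(1,2,5)=(1/4,1/2,5/4)$. Since Proposition~\ref{prop:summary} gives $S_A(a)=(1,2,5)$, the inequality is strict in every component.

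\textbf{Scale-equivariance of CG.} Starting from $u^\star+e_0$, the $k$-th CG error has the form $e_k=p_k(A)e_0$. Here $p_k$ is the minimizer in \eqref{eq:cg} for the weights $c_i^2$. Replacing $e_0$ by $e_0/2$ multiplies every weight by $1/4$, so the minimizing polynomial is unchanged. The error at each step is then exactly halved, and so is the residual $r_k=-Ae_k$. Equivalently, the Krylov spaces are identical and the CG recurrences are linear in $r_0$.

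From the residual sequence listed after Proposition~\ref{prop:summary}, the squared residuals from $u^\star+v/2$ are
\[
\left(\tfrac54,\ \tfrac16,\ \tfrac{10}{867},\ 0\right).
\]
The accepted output is still $u^\star$, reached at step three. Exact termination occurs there because $v$ has nonzero components on exactly three distinct eigenvalues, namely $1,2,4$.

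\textbf{Acceptance counts.} The tolerance is absolute relative to the fixed $\norm b_2^2=4$. Acceptance therefore means $\norm{r_k}_2^2\le 4\cdot10^{-4}$. For the path from $u^\star+v/2$, the values $1/6$ and $10/867\approx 1.15\times10^{-2}$ both exceed this threshold, so acceptance first occurs at the third update. For the path from $u^\star+a$, Proposition~\ref{prop:summary} gives $45/196>4\cdot10^{-4}$ followed by $0$, so acceptance occurs at the second update. This matches the fact that $a$ is supported on only two eigenvalues.

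\textbf{Main obstacle.} The only delicate point is the one just handled. Halving the start lowers all residuals, which could in principle move acceptance earlier, so the halved residuals must be checked against the fixed threshold. Here the margin exceeds an order of magnitude at step two, so the iteration count cannot drop.
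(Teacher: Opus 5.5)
Your proposal is correct and follows essentially the same route as the paper: quadratic homogeneity of $S_A$, scale-equivariance of exact-arithmetic CG giving the squared residuals $(5/4,1/6,10/867,0)$, and a check of the step-two value against the fixed threshold $4\times10^{-4}$ (equivalently $\sqrt{5/1734}>10^{-2}$). One small wording fix: the CG map $r_0\mapsto r_k$ is homogeneous of degree one rather than linear, because $\alpha_k$ and $\beta_k$ depend on $r_0$ but do not change when $r_0$ is rescaled; your polynomial-minimizer argument already establishes exactly this.
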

\noindent The second-step relative residual for $v/2$ is $\sqrt{5/1734}\approx0.0537>10^{-2}$. Appendix~\ref{app:identification} supplies exact derivations and a runnable check.

\paragraph{Constructive diagnostic.}
On 32 fresh diffusion tasks, reference-dependent spectral interventions match the three summaries to numerical precision while changing work. At amplitude 1 and tolerance $10^{-8}$, the strongest constructions save about 281–282 CG updates by concentrating mass on at most three spectral atoms. The resulting near-termination is expected. These oracle constructions establish feasibility, not the mechanism of naturally trained predictions or a deployable accelerator. Appendix~\ref{app:spectral} retains the matching defects, rejected construction, and full record counts. PCG requires transformed coordinates whose residual norm need not equal the physical stopping norm.

\section{Controlled study and evidence boundaries}
\label{sec:design}
Table~\ref{tab:protocol} summarizes the primary comparisons and native-interface checks. The core study solves variable-coefficient diffusion and upwind advection--diffusion on a square with homogeneous Dirichlet conditions. FNO and dilated CNN predictors use either supervised data loss or supervised loss plus discrete residual loss. Correction uses CG/Poisson-PCG/AMG-PCG for diffusion and restarted GMRES/Poisson-GMRES/ILU-GMRES for advection--diffusion. Each corrector has matched zero and coarse initializations; sparse direct solution is a further complete-system control.

An initial wave trains 48 source models; an expanded wave trains 192 trajectories to 6000 updates, covering 96 residual-weight configurations, 48 wider networks, and 48 models on new, larger source datasets. Checkpoints at 1200/3000/6000 share a training schedule. Source validation selects checkpoints and positive residual weights, separately from the data-only arm. The new-data block changes both realization and label count; a later nested-prefix study isolates source count within one pool. Extension training is counted separately.

The core waves contain 494,592 correction endpoints over main suites of 256 and 128 physical tasks, not 494,592 independent experiments. Optimizer seeds are averaged within task; grids, tolerances, repetitions, and budget checkpoints do not add independent tasks. Expanded-wave strata have 16/8/4 tasks at $n=31/63/127$; cross-grid comparisons use common subsets. Reported expanded-wave 95\% intervals are descriptive 4000-resample paired-task bootstraps, conditional on fitted states and unadjusted for multiple comparisons. Independent confirmation cohorts are not pooled with discovery cohorts.

All correctors check the true residual of the same float64 discrete system. Of 460,224 expanded-wave endpoints across three tolerances, 2,591 are capped failures, all at $n=127$ without preconditioning. They are retained in their denominators. Accepted $10^{-8}$ endpoints have maximum relative discrete error $1.04\times10^{-8}$. Appendices preserve every experiment family, interface, acceptance boundary, and sampling unit; the accompanying workbook preserves full aggregate configurations rather than only favorable cells.

\begin{table}[t]
\centering\small
\caption{Primary comparisons and their inferential scope. Counts are physical tasks or batch groups.}
\label{tab:protocol}
\begin{tabularx}{\linewidth}{p{.17\linewidth}p{.20\linewidth}p{.27\linewidth}Y}
\toprule
Comparison & Test units and retained states & Target information and outcome & Evidence status\\
\midrule
Corrector reversal & 16 joint tasks; 2 source datasets, 3 seeds each & Source-only selection; same $10^{-8}$ true residual; update counts & Exploratory paired comparison; second source supports direction\\
Fresh-task selection & 64 test tasks/stratum; 2 libraries of 12 models & Same 32 validation labels/stratum; choices locked before test generation & Revision-stage frozen protocol; no new training or universal time claim\\
Adaptation & 128 tasks/equation; pretrained, scratch, frozen & Matched training/validation labels; fixed-budget prediction error & Controlled comparisons; fixed-80-label selection study is separate\\
Online batch gain & 8 new groups of 32; one fixed model state and node & No retraining; full warm online time; $10^{-8}$ acceptance & Fresh-task confirmation after two discovery groups; groups not pooled\\
Official HINTS & 750 official cases; timing on a 32-case subset & Fixed official weights; one-time versus repeated neural correction & Native 29-interior-DOF check; no added source training\\
Scaled official NOWS & 48 tasks; three trained states & Official VINO objective; 100 epochs; matched zero/coarse controls & Reduced-budget native-code audit; 9,216 timed records\\
\bottomrule
\end{tabularx}
\end{table}

\section{Paired interventions locate the contribution}
\label{sec:results}
\subsection{Keeping the predictor fixed can reverse the benefit judgment}
\begin{figure}[t]
\centering
\includegraphics[width=\linewidth]{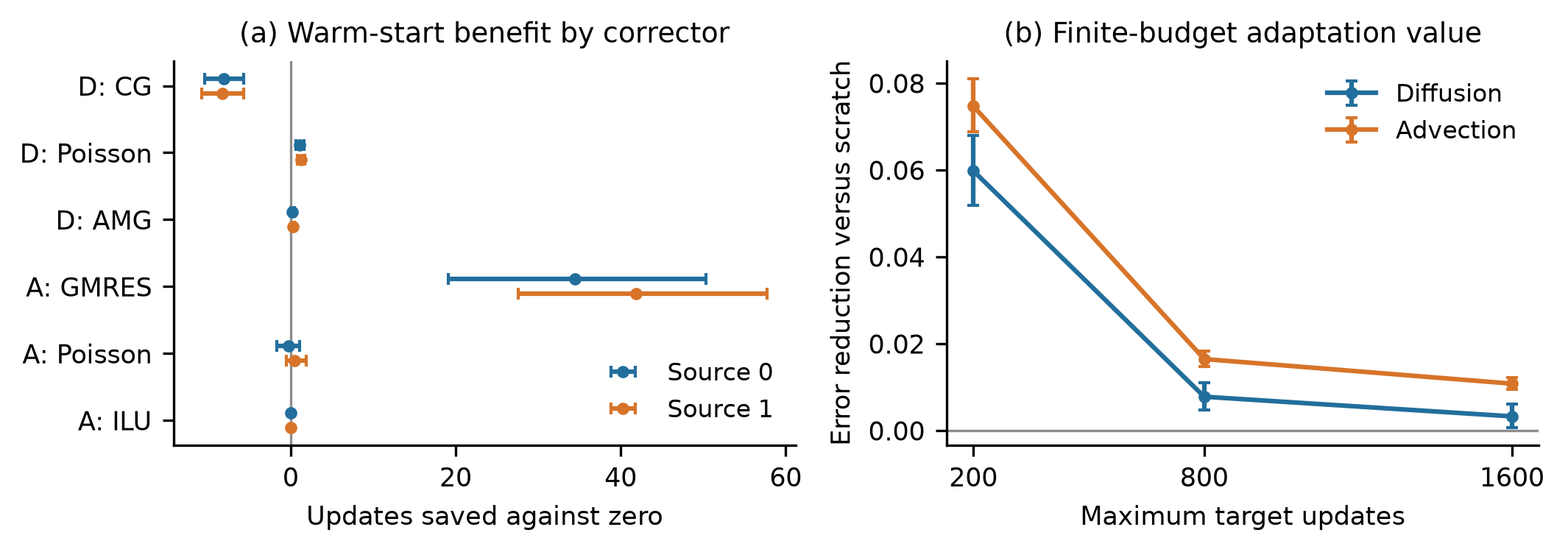}
\caption{\textbf{Contribution is conditional on how state is used.} Left: 6000-update data-FNO update savings versus zero under joint shift, $n=31$, tolerance $10^{-8}$. Right: pretrained-minus-scratch advantage expressed as error reduction, source 0, 256 training plus 64 validation labels, 128 test tasks per equation. Budget points share trajectories. Intervals are descriptive paired-task 95\% intervals.}
\label{fig:core}
\end{figure}
\begin{table}[t]
\caption{One retained predictor, different correction effects. Updates saved relative to zero; positive favors the data-trained FNO. Joint shift, $n=31$, tolerance $10^{-8}$, 6000-update budget. Each cell averages three optimizer seeds within each of 16 physical tasks; brackets are paired-task 95\% bootstrap intervals.}
\label{tab:core}
\centering\small
\begin{tabular}{llrr}\toprule
Equation & Corrector & Source set 0 & Source set 1\\\midrule
Diffusion & CG & -8.06 [-10.44, -5.71] & -8.27 [-10.77, -5.71]\\
Diffusion & Poisson-PCG & +1.125 [0.67, 1.58] & +1.29 [0.83, 1.73]\\
Diffusion & AMG-PCG & +0.23 [0.04, 0.44] & +0.29 [0.10, 0.48]\\
Advection--diffusion & GMRES & +34.42 [19.08, 50.38] & +41.88 [27.56, 57.77]\\
Advection--diffusion & Poisson-GMRES & -0.29 [-1.67, 1.06] & +0.46 [-0.56, 1.85]\\
Advection--diffusion & ILU-GMRES & +0.00 [0.00, 0.00] & +0.00 [0.00, 0.00]\\
\bottomrule\end{tabular}
\end{table}

Table~\ref{tab:core} fixes the learned state within each equation/source and varies the corrector. The diffusion data-FNO joint error improves from 0.1837 to 0.1739 over the common-schedule 1200--6000 budgets, yet at 6000 updates CG still takes 8.06 more updates than zero; Poisson-PCG saves 1.125. Both source datasets reproduce this sign change. For source 0, zero versus learned means are 279.81 versus 287.88 (CG), 58.375 versus 57.25 (Poisson-PCG), and 8.125 versus 7.896 (AMG-PCG): respectively 2.88\% more, 1.93\% fewer, and 2.82\% fewer updates, not runtime changes. The initial relative solution/energy/residual norms are 0.174/0.744/7.390, versus 1/1/1 for zero; source 1 gives 0.162/0.734/7.398. Thus this predictor improves solution and energy error but worsens residual. It is a corrector-reversal example. Section 5.2 tests the stronger ordering question among accurate trained states. Advection--diffusion GMRES saves 34.42/41.88 updates versus zero but needs 25.90/18.44 more than coarse initialization. Strong ILU leaves no observed update headroom on this small grid. A benefit relative to one control does not establish a benefit relative to another.

\paragraph{Strengthening learning helps without aligning every comparison.}
Source-validation-selected residual weights reduce joint diffusion CNN error from 0.1231/0.1438 to 0.0969/0.0979 and shrink CG disadvantages from 4.71/6.35 to 0.13/1.08 updates; the latter intervals include zero. Wider networks and new datasets preserve the main conditional pattern, although capacity does not improve every architecture. Within one fixed source pool, increasing the nested training prefix from 128 to 1024 lowers joint data-FNO errors from 0.1849 to 0.1262 for diffusion and 0.1868 to 0.1316 for advection--diffusion. Nevertheless, diffusion CG still needs 5.81 more updates than zero, and advection--diffusion GMRES saves 45.38 against zero but needs 11.94 more than coarse. All 3,840 nested-source endpoints accept.

\paragraph{The numerical interface and native objective matter.}
Fixed padding, proportional padding, and fixed physical FFT period define different refined-grid deployments; predeclared common-task comparisons avoid choosing a favorable interface after testing. Separately, a scaled audit retains official NOWS Darcy code and its VINO energy objective \citep{nows2026}: three seeds, one 256-example source pool, 100 epochs, 48 test tasks, and 9,216 accepted timed records. At grid 128 and tolerance $10^{-8}$, prediction errors are 2.26\% in distribution and 57.57\% under contrast shift; initialization respectively saves 20.15 CG updates or adds 38.71 versus zero. The shifted case is a loss of predictive quality. This is a native-objective sensitivity check. Appendix~\ref{app:nows-scaled} retains every stratum, grid, and host-specific timing comparison.

\subsection{Accurate trained states can misrank correction work}

The frozen-library study uses two 12-model diffusion libraries and the same 32 new labeled validation tasks per distribution. Choices minimizing prediction error or each corrector's accepted update count, with predetermined tie-breaking rules and failure penalties, were locked before generating 64 held-out tasks per distribution.

\textbf{Ranking mismatch persists among accurate predictions.} On the 64 ID tasks at $n=63$, restricting model pairs to those for which both relative prediction errors are at most $5\%$, the mean taskwise error--CG-work reversal frequencies are $45.47\%$ and $45.57\%$ in the two source libraries. Even when all three initial summaries---solution error, energy error, and initial residual---are strictly smaller for the same prediction, the corresponding conditional reversal frequencies are $23.50\%$ and $23.87\%$, with task-bootstrap $95\%$ intervals $[17.61\%, 30.39\%]$ and $[18.58\%, 29.33\%]$, respectively (64 eligible tasks in each library; Appendix~\ref{app:selection-study}). These are means of within-task fractions. The same dominance subsets have no observed reversal under Poisson-PCG, but many work ties. Thus joint improvement in the three summaries need not preserve CG-work rankings even among accurate trained states. These observations bridge the analytic counterexample to trained predictions.

\textbf{Selection has a separate, conditional consequence.} At $n=63$ in distribution, work-based selection saves 2.50 and 3.33 CG updates in the two source libraries (Table~\ref{tab:selection-new-tasks}), while Poisson-PCG savings are 0.33 and 0.38 updates. Five of 16 unadjusted paired intervals are strictly positive; none is so under joint shift, and five comparisons select identical models. The observed value of work-based selection therefore depends on the correction contract. This is offline model selection, distinct from the paid per-query probes in Section~\ref{sec:deployment}. Computing work scores adds 0.19--4.97 CPU seconds per library/configuration; this selects iterations, not minimum time. For example, one ID/$n=31$ choice saves 0.125 PCG updates while its observed complete CPU path is $2.18\,\mathrm{ms}$ slower.

The 64 validation and 128 test physical tasks are shared across grids and source libraries; all 19,968 validation/test endpoints accept. Intervals condition on the fixed library and one validation draw. Appendix~\ref{app:selection-study} gives the frozen rules, absolute work and costs, and quality-stratum denominators.

\begin{table}[t]
\centering\small
\caption{New-task conditional model selection. Test updates saved by selecting
on validation work rather than prediction error (prediction minus work;
positive is better). Paired-task 95\% intervals use 64 test tasks per distribution. Source libraries 0/1 each contain 12
frozen candidates. No multiplicity adjustment.}
\label{tab:selection-new-tasks}
\begin{tabular}{llcrr}
\toprule
Tasks & Grid & Source & CG & Poisson-PCG \\
\midrule
ID & 31 & 0 & 0.000 [0.000, 0.000] & 0.125 [0.047, 0.219] \\
ID & 31 & 1 & -0.516 [-1.438, 0.391] & 0.016 [-0.078, 0.109] \\
ID & 63 & 0 & 2.500 [1.047, 3.844] & 0.328 [0.219, 0.453] \\
ID & 63 & 1 & 3.328 [1.531, 5.016] & 0.375 [0.266, 0.500] \\
Joint & 31 & 0 & 0.000 [0.000, 0.000] & 0.000 [0.000, 0.000] \\
Joint & 31 & 1 & -0.156 [-1.266, 0.891] & -0.094 [-0.234, 0.031] \\
Joint & 63 & 0 & 0.000 [0.000, 0.000] & 0.000 [0.000, 0.000] \\
Joint & 63 & 1 & 0.547 [-2.406, 3.454] & 0.109 [-0.016, 0.234] \\
\bottomrule
\end{tabular}
\end{table}

\subsection{Matched adaptation reveals value that convergence need not preserve}
With symmetric learning-rate and checkpoint selection, 16 target training plus 64 validation labels give diffusion errors 0.1751 after adaptation, 0.2751 from scratch, and 0.1834 when frozen for source 0. At 256 training labels and larger budgets, scratch approaches the pretrained model (Figure~\ref{fig:core}); the source-1 difference at 1600 updates is 0.00213 with an interval spanning zero. This supports finite-budget adaptation value, not an irreducible asymptotic advantage.

A separate fixed-80-label study controls checkpoint choice on the same trajectory. At source 0, 16/64 training/validation, learning rate $10^{-3}$, and 1600 updates, selecting validation-best rather than final gives diffusion error 0.1711 instead of 0.1970; the paired reduction is 0.02591 [0.02188,0.02989]. Advection--diffusion improves from 0.1527 to 0.1328. With symmetric learning-rate selection, the three tested allocations favor 64/16 over 16/64, but some checkpoint contrasts are unfavorable. All 108 trajectories actually run to 1600 updates: selecting an earlier checkpoint does not establish realized training-cost savings. (Appendix~\ref{app:information}).

\paragraph{A boundary check that revises an earlier negative result.}
In the monotone one-dimensional problem $-\nu u''+\gamma u^3=f$, stronger training changes joint-shift Newton savings from near zero to positive; 56 new tasks confirm $1.000$ [0.708,1.292] saved updates at $n=63$, $10^{-8}$. The earlier extreme-stratum negative sign is not confirmed: $-0.667$ [$-1.668$,0.418]. Coarse initialization remains competitive. The family has a unique discrete root; standard strong monotonicity bounds error to that root, not Newton work or continuum error. (Appendix~\ref{app:nonlinear-long}).

\section{A contribution must survive its full deployment cost}
\label{sec:deployment}
\paragraph{A real, small online gain.}
Following two exploratory batches, eight new 32-query groups confirm a conditional benefit for physics-FNO on diffusion at $n=127$: complete online cost is 27.267 versus 27.469\,ms/query for zero--Poisson-PCG. The paired difference (learned minus zero) is $-0.201$\,ms, with group-bootstrap 95\% interval [$-0.230$,$-0.171$]; all eight groups are faster, a 0.73\% reduction in the ratio of mean costs. All 5,376 repeated query endpoints pass $10^{-8}$ acceptance. Data-FNO is instead 0.278\,ms slower than zero. The new groups exclude the discovery groups, fix one source/optimization seed and node, and include assembly, transfer, inference, setup, correction, synchronization, and residual acceptance, but exclude training and model loading. For an existing model this is an online gain; from-scratch lifecycle benefit additionally depends on offline cost. At the observed mean difference, each extra second of offline cost requires approximately 4965 queries to amortize, assuming unchanged query costs. (Appendix~\ref{app:batch-confirmation}).

\paragraph{The alternative and reuse permissions can change the conclusion.}
In the serial $n=127$ deployment, diffusion costs are 28.57/29.89/40.87\,ms for zero--Poisson, source-0 physics-FNO, and direct solution: beating direct does not identify a neural-initialization benefit. Advection--diffusion instead favors direct (41.14\,ms versus 100.01 for zero--Poisson). On a \emph{different, fixed-matrix workload}, allowing setup once across eight measured right-hand sides gives direct costs 6.46/6.54\,ms for diffusion/advection--diffusion, versus 29.23/121.14 for physics-FNO. Other query counts are projections. Classical retained factorizations must receive the reuse permissions that the workload allows; these observations are not pooled with variable-matrix batches.

\paragraph{Less correction need not mean less total work, for either side.}
In the NOWS audit, coarse initialization uses fewer updates at grid 128 and $10^{-8}$, yet its additional construction and solve make it roughly 0.300--0.330\,s more expensive than the learned path in the reported host/stratum/corrector groups. No such group establishes a paired-bootstrap time advantage of the learned path over matched zero or direct solution. Official matrix assembly (approximately 1.27--1.30\,s) dominates that retained implementation. Likewise, the inexpensive one-dimensional nonlinear banded solve can outweigh an initialization's saved Newton updates. Update counts alone cannot establish utility for a classical initializer either.

\paragraph{Selection, startup, and failure are part of the question.}
On 32 tasks per equation, minimum-residual selection always chooses zero but adds 4.03/4.59\,ms; five-update probes add 5.08/5.70\,ms after charging all discarded candidates and restarting from the selected field. Fresh-process runs accept all 240 solves but take 2.31--2.48\,s for neural paths versus 0.41--0.57\,s for classical paths from process launch to acceptance notification. Finally, all 142 failures among 2,688 boundary-intervention endpoints occur at coefficient-log scale three under the fixed capped iterative paths; direct solution accepts every case. Failed groups receive no successful-subset speed claim. Complete contracts and all configurations are retained in the deployment appendices.

\paragraph{Native repeated correction is a distinct positive control.}
With official HINTS weights on 750 official Poisson cases, repeated correction accepts all cases within 400 updates, while Jacobi and one-time neural initialization accept none. At a 10,000-update cap all accept, with means 138.84, 4992.71, and 3899.80; neural and numerical updates have unequal costs. On 32 timed cases, HINTS takes 1.010\,ms versus Jacobi's 18.809\,ms and a tridiagonal direct solve's 0.0299\,ms. They preserve a clear benefit over the native baseline while separating it from superiority to all alternatives (Appendix~\ref{app:hints-official}).

\section{Related work and the boundary of the claim}
\label{sec:related}
\paragraph{Physics-informed learning and reusable state.}
Neural PDE representations predate modern PINNs \citep{lagaris1998,raissi2019}; residual and variational formulations include DGM and Deep Ritz \citep{dgm2017,deepritz2017}. Analysis separates approximation, optimization, sampling, and stability rather than guaranteeing accuracy from a small training loss \citep{krish2021,ntk2020,precondition2023,deryckreview2024}. DeepONet, FNO, and PINO retain maps across tasks \citep{deeponet2021,fno2020,pino2021}; meta-learning and learned optimizers explicitly permit target computation \citep{metapde2022,metaloss2021,learnedsolver2024}.

\paragraph{Direct precedents.}
Learned fixed-point warm starts already show that equal-distance initializations can converge differently, optimize downstream finite-step losses, and study generalization \citep{sambharya2024}; Solver-in-the-Loop includes learned Poisson-CG initialization \citep{solverloop2020}. Neural super-fidelity and NOWS study warm starts, HINTS exploits spectral complementarity, and FCG-NO learns repeated correction inside flexible CG \citep{superfidelity2025,nows2026,hints2022,fcg2024}. Spectrally safe Newton starts address state-dependent Jacobian indefiniteness \citep{spectralsafe2026}; error-conditioned solvers discuss residual--reconstruction mismatch and learn recurrent updates \citep{ens2026}. Our fixed-SPD intervention instead changes initial-error weights without changing the matrix. A recent study uses imperfect operator priors for nonlinear PINN solution-basin selection \citep{ancestor2026}.

\paragraph{Evidence and numerical utility.}
Strong-baseline and accuracy--cost critiques already exist \citep{grossmann2024,weakbaseline2024}; evaluation of whether outputs support scientific claims is another explicit evidence question \citep{wangevidence2026}. The protocol audit distinguishes frozen parameters from observation-free prediction, a reused solver program from a pretrained solution network, and repeated neural correction from one-time initialization (Table~\ref{tab:audit-protocol-summary}). These distinctions specify which state, information, and computation enter the comparison. The controlled experiments test selected comparisons rather than reproduce every inspected protocol. Classical reduced bases and Krylov recycling also reuse state \citep{hesthaven2016,recycling2006}. Appendix~\ref{app:extended-literature} retains the broader literature map on discretization, geometry, foundation models, generative priors, and executable programs.
\section{Discussion and Limitations}
\label{sec:discussion}
A predictive claim needs independently evaluated error; source-learning value needs a valid state replacement with matched information and budget; numerical utility additionally needs complete costs and competent alternatives. Accepted outputs alone cannot replace these comparisons.

The evidence covers regular geometries, two linear PDE families, and a monotone unique-root nonlinear family, with limited source datasets and small fine-grid cohorts. The HINTS check is small; NOWS uses reduced training. Batch confirmation fixes one state and node. Intervals condition on fitted libraries, not the population of possible training sets; offline costs remain incompletely attributed. Geometry transfer, nonnormal dynamics, branch selection, long-time rollouts, and calibrated inverse posteriors require separate evidence.

Work-based validation improves some ID comparisons, but joint-shift gains and complete-time benefits of that selection policy remain unestablished. Positive finite-budget adaptation and the separate batch-cost confirmation nevertheless show why this audit is not a general rejection of learning. An accurate predictor, a beneficial retained state, and a competitive solver are different claims; each requires evidence under its intended comparison and deployment.

\section{Conclusion}

Learning can change the work required to reach a solution without changing the solution itself. Conversely, a more accurate prediction can require more correction. We make these distinctions testable through a reuse contract separating output correctness, conditional learning contribution, and complete numerical utility. For fixed-tolerance CG, identical solution-error, energy-error, and residual summaries—even together with the same accepted solution—do not determine correction work. The corresponding ordering failure also occurs among trained predictions with at most 5\% relative error, including pairs that improve all three summaries simultaneously. Holding the learned state fixed can reverse its benefit across correctors; changing the numerical control can reverse the comparison again. The computational value of reuse therefore resides in the interaction between retained state, numerical process, and comparator, rather than in predictive accuracy alone.

This distinction changes both model-selection and deployment decisions. Work-based validation improves some held-out comparisons; matched adaptation, stronger nonlinear training, and native repeated correction reveal distinct finite-budget benefits. Independent batches also confirm a 0.73\% complete online saving for one frozen physics-FNO deployment against zero–Poisson-PCG, while other workloads favor classical reuse. These findings separate two decisions: whether learning helps a specified computation, and whether that computation is worth deploying. The first requires a matched state replacement; the second additionally charges the full computational path and grants competent alternatives the reuse their workload permits. A useful learned component and a competitive numerical solver are separate achievements. Correctness validates the output; the value of learning is established by what changes when its retained state is replaced.

\section{Future Work}

The next target is the generalization of computational advantage, not only of predictions. Building on solver-aware training, future studies should test whether learning to remove error components that remain expensive for a chosen corrector yields benefits that persist across changes in coefficients, discretization, preconditioning, and tolerance. Compact, reference-free features from operator actions or short residual trajectories could support cost-aware model selection. Such policies should be evaluated against a competent classical fallback, charging every diagnostic and discarded candidate, with explicit bounds on additional cost and acceptance risk. The central decision problem is when information about a model’s usefulness is worth the computation required to obtain it.

A second direction is to optimize reuse over a complete lifecycle budget. Source-data generation and pretraining, target adaptation and validation, and classical factorization or preconditioner reuse should be compared as competing investments in future solves. Experiments should measure startup and accepted online paths alongside these investments, and vary source datasets, optimization seeds, validation draws, and execution environments. This would turn a gain conditional on one fitted state into a test of whether the training-and-selection procedure reliably produces useful states. Query volume and permitted classical reuse should enter that decision explicitly: at unchanged query costs, repeated deployment amplifies an online advantage but cannot amortize an online disadvantage.

Finally, state interventions should follow the native mechanism and the intended outcome. A direct extension is matched pretrained-versus-scratch target-residual optimization, replacing every source-dependent object—including normalization, anchors, and optimizer state—while holding target information and selection opportunities fixed. Beyond unique-root forward problems, nonlinear branch selection and inverse-posterior calibration should become explicit evaluation targets rather than consequences inferred from residual acceptance. These settings would test whether learning changes which outcome is obtained, as well as the resources needed to obtain it. The broader objective is to make computational advantage itself the object that must transfer.

\label{page:main-end}

\clearpage
\section*{AI use statement}
AI writing assistants were used to help draft and edit the manuscript. All scientific claims, dataset construction decisions, experimental results, and final text were reviewed and approved by the authors.

\section*{Reproducibility statement}
Appendix~\ref{app:artifact-scope} and the anonymous executable supplement distinguish four capabilities: exact verification of the CG construction; reconstruction of the core main table and independent batch confirmation from raw task-level records; reevaluation using recovered weights and runnable task/solver code; and regeneration and analysis of the frozen-library selection study on new task identifiers. All main-table bootstrap units, batch grouping and run order, new selection choices, acceptance checks, and cost boundaries are explicit. The 48 required deployment checkpoints and 24-state selection subset are included.

\bibliographystyle{iclr2027_conference}
\nocite{*}
\bibliography{references}

\clearpage
\appendix
\section{Reading guide and preservation of evidence}
\label{app:guide}
The main text separates three questions: whether an output is accepted, what retaining source state changes under a matched deployment, and whether the resulting system has a complete-cost advantage over competent alternatives. The supplementary sections retain the original study's full experimental scope, including unfavorable configurations, failed runs, exploratory choices, independent confirmations, and incomplete reproducibility boundaries. No historical data are reclassified as new experiments; the separately frozen new-task model-selection study is explicitly identified.

\begin{center}
\begin{tabularx}{\linewidth}{p{0.49\linewidth}Y}
\toprule
Question & Supplementary location \\
\midrule
What exactly do the algebraic statements establish? & Appendix~\ref{app:identification}; the supplied exact-arithmetic verification script.\\
What happened to the earlier three diagnostic ideas? & Appendix~\ref{app:diagnostics}: explicit analytic reconstructions, not additional trained-model experiments.\\
How are tasks, models, correctors, and budgets defined? & Appendices~\ref{app:units}--\ref{app:nonlinear-implementation}, followed by extension protocols.\\
Where are all deployment conditions? & Appendix~\ref{app:deployment-details}; separate batch confirmation in Appendix~\ref{app:batch-confirmation}.\\
How are source count and target information isolated? & Appendices~\ref{app:source-information} and \ref{app:information}.\\
Which native mechanisms and nonlinear extensions were tested? & Appendices~\ref{app:nonlinear-long}, \ref{app:hints-official}, \ref{app:nows-scaled}, and \ref{app:boundary-interface}.\\
How does the literature relate, and what was not tested? & Appendix~\ref{app:extended-literature}; versioned protocol records remain separate from the broad bibliography.\\
Does choosing by work help on fresh tasks? & Appendix~\ref{app:selection-study}; frozen library, new tasks, ranking reversals, and costs.\\
What is actually inside this package? & Appendices~\ref{app:evidence-map} and \ref{app:artifact-scope}.\\
\bottomrule
\end{tabularx}
\end{center}
The historical aggregate blocks are numerical companions, not counts of independent trials. The supplementary text retains source-record distinctions; host labels are anonymized without pooling timings across machines. Original administrative identifiers belong in the author-only archive, not the anonymous submission.

\clearpage

\section{Controls and the claims they support}
Table~\ref{tab:claims} summarizes the controls needed for each claim.
\begin{table}[htbp]
\centering
\small
\caption{\textbf{The control determines the claim.} All paired rows require declared target information, budgets, and evaluation rules. These are complementary comparisons, not interchangeable proofs of neural utility.}
\label{tab:claims}
\begin{tabularx}{\linewidth}{p{0.20\linewidth}YY}
\toprule
Claim & Required observation or comparison & Not established by that observation alone \\
\midrule
Output correctness & Independent acceptance and the stated reference/continuum checks & A benefit attributable to source learning \\
Predictive reuse & Frozen prediction versus an appropriate source-independent prediction control & Reduced numerical correction work \\
Finite-budget adaptation value & Pretrained versus same-architecture scratch, with matched labels and selection opportunities & An irreducible asymptotic error advantage \\
Initialization value & Learned versus zero/coarse under the same corrector and tolerance & Advantage under another corrector or lower total runtime \\
Complete solver utility & Accepted full paths versus competent alternatives, with matched workload and reuse permissions & Lifecycle superiority when offline costs are unaccounted for \\
\bottomrule
\end{tabularx}
\end{table}

\section{Exact constructions for the evidence gaps}
\label{app:identification}
This section uses classical CG to construct a four-dimensional example with identical initial solution, energy, and residual summaries but different correction work at a fixed nonzero tolerance. We derive the exact residual sequences and identical accepted outputs, then establish a strict-dominance corollary.

\subsection{The observation must be specified}
Let $\mathcal X$ be a set of executions with a fixed task and declared algorithmic contract. An observation map $O:\mathcal X\to\mathcal Y$ is sufficient to determine work $K$ on this set only if some map $g$ satisfies $K=g\circ O$. If two executions have equal observations but different work, no such $g$ exists. This is the elementary meaning of ``does not determine'' used in the paper; it is not a statistical identification theorem for an unrestricted causal model.

Reporting that both executions are accepted gives at most a binary observation for each. Reporting the exact limiting solution also gives no work distinction when both converge to the same unique root. These observations need not describe a full finite trajectory. In particular, for $x_k=q^k x_0$ with known $0<q<1$ and finite known $k$, $x_0=q^{-k}x_k$ is recovered from an exact observation. Closeness of outputs alone does not imply zero mutual information, indistinguishability under arbitrary measurement, or loss of all information about source state. Any noise- or quantization-based theorem would have to specify the observation channel and has not been established here.

\subsection{Proof of Proposition~\ref{prop:summary}}
The system is $A=\operatorname{diag}(1,2,3,4)$, $b=\mathbf1$, so $u^\star=(1,1/2,1/3,1/4)^\top$ and $\norm b_2^2=4$. With $e_0=a$ or $v$, the squared spectral weights are
\[
 w=(1/2,0,1/2,0),\qquad z=(1/3,1/2,0,1/6).
\]
For each vector, direct substitution yields
\[
 \sum_i w_i=\sum_i z_i=1,\qquad
 \sum_i\lambda_iw_i=\sum_i\lambda_iz_i=2,\qquad
 \sum_i\lambda_i^2w_i=\sum_i\lambda_i^2z_i=5.
\]
Thus the three squared summaries coincide. For CG with $r_0=b-Au_0=-Ae_0$ and $p_0=r_0$, the exact recurrence is
\begin{align*}
 \alpha_k&=\frac{r_k^\top r_k}{p_k^\top A p_k}, &
 u_{k+1}&=u_k+\alpha_kp_k, &
 r_{k+1}&=r_k-\alpha_kAp_k,\\
 \beta_k&=\frac{r_{k+1}^\top r_{k+1}}{r_k^\top r_k}, &
 p_{k+1}&=r_{k+1}+\beta_kp_k.
\end{align*}
Applying this recurrence gives the following exact squared residuals:
\begin{center}
\begin{tabular}{lrrrr}
\toprule
Initial error & $\norm{r_0}_2^2$ & $\norm{r_1}_2^2$ & $\norm{r_2}_2^2$ & $\norm{r_3}_2^2$ \\
\midrule
$a$   & $5$ & $45/196$ & $0$ & already terminated \\
$v$   & $5$ & $2/3$ & $40/867$ & $0$ \\
$v/2$ & $5/4$ & $1/6$ & $10/867$ & $0$ \\
\bottomrule
\end{tabular}
\end{center}
Acceptance at relative residual $10^{-2}$ is equivalent to squared absolute residual at most $4\times10^{-4}$. For $a$, the first residual exceeds this threshold and the second is zero. For $v$, the first and second exceed it and the third is zero. The SPD system has a unique solution, so the returned vectors in both executions are exactly $u^\star$. Therefore the observation
\[
 O=(A,b,\epsilon,S_A(e_0),u_{K_\epsilon})
\]
is identical for the two executions, but $K_\epsilon$ is two versus three. Even this observation, stronger than terminal acceptance alone, cannot determine work on the constructed pair. Subtracting from the same reference execution's work also cannot determine the corresponding saving from $O$ alone. The example demonstrates unequal savings, not opposite signs against every possible reference.

\subsection{Proof of Corollary~\ref{cor:dominance}}
Scaling $v$ by $1/2$ scales all three squared initial summaries by $1/4$. In exact-arithmetic CG the error and residual sequences scale by the same factor, while the nonzero-step recurrence coefficients remain unchanged. The residual row for $v/2$ in the table follows. At step two,
\[
 \frac{\norm{r_2}_2}{\norm b_2}
 =\sqrt{\frac{10/867}{4}}=\sqrt{\frac5{1734}}>10^{-2};
\]
step three is exact. Thus strict componentwise improvement in solution, energy, and residual summaries is not a sufficient condition for fewer accepted CG updates at this fixed tolerance. This is a worst-case logical counterexample; no frequency in trained networks is inferred.

\subsection{The physical-objective connection and its limits}
The energy identity follows from substituting $u=u^\star+e$ and $Au^\star=b$ into $\mathcal E(u)$; the cross term vanishes. A common positive normalization of each objective across the two initializations preserves equality and strict ordering. This does not identify every PINN loss with $S_A$: collocation, quadrature, boundary penalties, parameterization, and non-SPD operators can alter the objective.

For a differentiable parameterization $u_\theta$ on the same fixed linear system, let $r_\theta=Au_\theta-b$, $J_\theta=\partial u_\theta/\partial\theta$, and $\mathcal L=\tfrac12\norm{r_\theta}_2^2$. Continuous Euclidean gradient flow $\dot\theta=-\nabla_\theta\mathcal L$ gives, by the chain rule,
\[
 \dot r_\theta=-AJ_\theta J_\theta^\top A^\top r_\theta.
\]
This exact identity illustrates why a scalar residual value alone does not specify residual dynamics: the parameterization also enters. It is not a fixed-kernel approximation or a convergence guarantee for Adam, L-BFGS, sampled continuous PINNs, or arbitrary nonlinear residuals. Related PINN kernel and operator-conditioning analyses are prior work \citep{ntk2020,precondition2023}. The present experiments do not add a coordinate-PINN optimization benchmark.

\subsection{Scope of the constructions}
The constructions show that the selected scalar summaries and accepted output can coincide while correction work differs under a fixed finite-dimensional solver contract. The actual-matrix oracle interventions extend this analysis to the tested PDE discretizations by constructing errors with matched summaries and different spectral distributions. These results establish the insufficiency of the summaries for determining work. Explaining the trained-model effects requires further evidence linking their error structure to correction dynamics; the prevalence of these effects and their prediction from inexpensive features remain open questions.
\clearpage

\section{Earlier diagnostic ideas retained as analytic sanity checks}
\label{app:diagnostics}
The following analytic constructions examine three distinct gaps: sampled residual versus continuous residual, residual magnitude versus solution error, and corrected accuracy versus the contribution of a retained predictor. Together, they motivate the controlled trained-network experiments.

\subsection{Zero sampled residual with unit relative solution error}
On $(0,1)$, consider $-u''=\pi^2\sin(\pi x)$ with zero Dirichlet data. The unique solution is $u^\star(x)=\sin(\pi x)$. For integer $m\geq2$, define $\widehat u(x)=\sin(\pi x)+\sin(m\pi x)$ and sample at $x_j=j/m$, $j=1,\ldots,m-1$. The residual is
\[
 -\widehat u''(x)-\pi^2\sin(\pi x)=m^2\pi^2\sin(m\pi x),
\]
which vanishes at every sampled point and at the boundary in exact arithmetic. Orthogonality of sine modes gives $\norm{\widehat u-u^\star}_{L^2}/\norm{u^\star}_{L^2}=1$. The continuous residual is not small. Floating-point sine evaluation gives small rounding residuals rather than a symbolic zero.

\subsection{A well-posed problem with very small relative residual and unit error}
Let $\delta=10^{-8}$ and $L_\delta=-d^2/dx^2-\pi^2+\delta$ on $H^2(0,1)\cap H_0^1(0,1)$, measured in $L^2$. Its eigenvalues are $(k^2-1)\pi^2+\delta>0$ for $k\geq1$, so the problem is invertible and the inverse $L^2$ operator norm is $1/\delta$. Set
\[
 u^\star=\sin(2\pi x),\quad
 f=(3\pi^2+\delta)\sin(2\pi x),\quad
 \widehat u=u^\star+\sin(\pi x).
\]
Then
\[
 \frac{\norm{\widehat u-u^\star}_{L^2}}{\norm{u^\star}_{L^2}}=1,
 \qquad
 \frac{\norm{L_\delta\widehat u-f}_{L^2}}{\norm f_{L^2}}
 =\frac{\delta}{3\pi^2+\delta}<10^{-9}.
\]
Well-posedness does not make the stability constant uniformly favorable as $\delta\to0$. 

\subsection{Correcting an uninformative learned map}
For an SPD matrix $A$, let $\mathcal S$ be a proper invariant eigenspace and $P_{\mathcal S}$ its orthogonal projector. A retained linear map $G=A^{-1}P_{\mathcal S}$ solves the system exactly on right-hand sides in $\mathcal S$ but returns zero on $\mathcal S^\perp$. For a nonzero target $b\in\mathcal S^\perp$, the learned prediction $Gb=0$ equals the zero control and has unit relative solution error. CG nevertheless solves the SPD system from either start in at most the matrix dimension in exact arithmetic. Final exactness therefore does not distinguish the uninformative retained map from the control. For a mixed right-hand side, $G$ removes the familiar component; the remaining Krylov work depends on the unrepresented spectral support. 

\paragraph{Relation to the main experiment.}
These examples separate sampling reliability, the stability of residual-based error estimates, and the contribution of a retained predictor to corrected accuracy. The equal-moment construction in Appendix C addresses the next question: whether knowing the initial solution error, energy error, and residual is sufficient to determine correction work. The empirical experiments examine these distinctions in trained models.
\clearpage

\section{Scope, provenance, and independent units}
\label{app:units}
The core experiments are independent controlled implementations, not a re-creation of every experiment in the cited papers. Original uploaded aggregate tables cannot recover missing per-task results or model weights; we do not pool them into the newly executed samples. The initial controlled wave and its 6000-update expansion have separately retained records. Both positive and unfavorable configurations are available. An endpoint means one task, discretization, state, corrector, tolerance, and interface combination; a physical task means one sampled equation instance. Neither optimizer repetitions nor measurements on multiple grids increase the number of physical tasks.

\begin{table}[h]
\centering\small
\caption{Completed core waves before the deployment and nonlinear extensions. These are counts of records or trajectories, not interchangeable sample sizes.}
\begin{tabular}{lrrr}\toprule
Quantity & Initial wave & Expanded wave & Total\\\midrule
Source-training trajectories &48&192&240\\
Target-adaptation trajectories &36&108&144\\
Main correction endpoints &34368&460224&494592\\
Independent main tasks &256&128&384\\
Timing repetitions &2184&37440&39624\\
Spectral records &64&41463&41527\\\bottomrule
\end{tabular}
\end{table}
The expanded wave also uses 256 adaptation-test tasks, 64 timing tasks, and 32 spectral tasks, separate from its 128 main tasks. Thus it introduces 480 test-task identities. Source dataset identifiers are separate from optimizer seeds. Within a source dataset, paired-task summaries average optimizer repetitions before resampling physical tasks. Two source datasets are reported separately, not treated as sufficient draws for a population-level confidence statement about training data. Expanded-wave intervals use 4000 resamples (initial-wave archived analyses used 10000); they describe the fitted configurations under their specified task distribution, and are unadjusted for multiple comparisons. They are not simultaneous confidence guarantees over the entire audit.

\section{Linear PDE generators and numerical implementation}
\label{app:linear}
\subsection{Equations and source distributions}
The domain is $[0,1]^2$ with zero Dirichlet boundary conditions. Diffusion solves $-\nabla\cdot(a\nabla u)=f$; advection--diffusion solves $-0.1\nabla\cdot(a\nabla u)+(2,1)\cdot\nabla u=f$. A finite trigonometric series $g$ uses frequencies $(i,j)\in\{-4,\ldots,4\}^2\setminus\{(0,0)\}$, independent Gaussian sine/cosine coefficients, and weights $\exp(-2\pi^2\ell^2(i^2+j^2))$. Its coefficient variance normalization is shared across resolutions. We set $a=\exp(\sigma\tanh g)$. In distribution, $\sigma\sim U[0.5,1]$ and $\ell\sim U[0.15,0.30]$; contrast shift uses $\sigma\sim U[1.5,2]$, roughness shift uses $\ell\sim U[0.05,0.10]$, and joint shift changes both. Forcing is one plus a normalized random sine series over modes $1$--$4$ in both directions, scaled by $0.4$ and with inverse-frequency coefficient decay. The historical run record specifies the exact generator and random draw order; this editorial package does not include every original data array (Appendix~\ref{app:artifact-scope}). Re-evaluating the continuous series on another grid does not resample a task.

The interior grid has $n\in\{31,63,127\}$ and spacing $h=1/(n+1)$. Diffusion fluxes use harmonic face coefficients; positive-velocity advection uses first-order upwind differences. Matrix assembly and the differentiable training operator implement the same discrete equation. A sparse direct solve provides an offline reference; it is not passed to a deployment initializer, selector, or iterative stopping rule. Manufactured-solution refinement, matrix/tensor-operator agreement, boundary values, and finite gradients are checked before scientific runs. These checks do not turn a finite-dimensional reference into an exact continuous solution.

\subsection{Networks and optimization}
The base FNO has width 16, three spectral blocks and six retained Fourier modes per axis, pointwise mixing, GELU activations, and a width 32 projection. Inputs are $\log a,f,x,y$. The base CNN has width 56 and convolutional dilation sequence 1,2,4,8,1, spanning the training domain. The output multiplies a hard zero-boundary factor proportional to $x(1-x)y(1-y)$, common to both objectives. The wide controls use FNO width 32 and CNN width 112. Real parameter accounting counts each complex parameter as two real scalars.

The data objective is mean squared relative discrete solution error. A physics objective adds $\lambda$ times mean squared relative discrete residual; $\lambda\in\{10^{-4},10^{-3},10^{-2}\}$ in the principal sweep. All arms share supervised source labels, so this is not a claim of label-free training. Adam uses batch 24 and gradient clipping 1. The expansion runs 6000 updates with a common cosine learning-rate schedule from $10^{-3}$ to $10^{-4}$. Checkpoints at 1200,3000,6000 retain the best source-validation relative error so far. These are prefixes of one training schedule, not three independently optimized short schedules. Positive-$\lambda$ selection excludes the data-only arm and uses no target-test labels.

The 96-trajectory sweep is two equations times two source datasets times two architectures times four weights times three optimizer seeds. The 48 wider controls use two objectives with fixed $\lambda=10^{-3}$ for physics. The 48 new-source controls use 1024 training and 128 validation examples, whereas source datasets 0/1 use 384/64. The latter block changes both data realization and sample count, and cannot by itself identify the effect of sample count.

\subsection{Correction, padding, and acceptance}
For diffusion we test CG, Poisson-PCG, and symmetric smoothed-aggregation AMG-PCG. The constant-coefficient Poisson inverse uses orthonormal discrete sine transforms and exact discrete eigenvalues. AMG uses symmetric Gauss--Seidel smoothing and a V-cycle. For advection--diffusion we test restarted GMRES, Poisson-GMRES, and ILU-GMRES, with ILU drop tolerance $10^{-4}$ and fill factor 10. GMRES restarts after 50 updates; both solver families have a total 1200-update cap. Every accepted endpoint must satisfy the true float64 relative residual at tolerance $10^{-4},10^{-6}$, or $10^{-8}$, with multiplier $1+10^{-6}$ for numerical slack. Callback or preconditioned residuals alone do not establish acceptance.

The coarse control solves the same task on $n_c=(n-1)/2$ and interpolates with aligned boundary nodes; its setup and solve are charged in time comparisons. Direct, zero, and coarse controls are not trained. An exact zero right-hand side uses the separately declared absolute-residual rule in the deployment extension.

With $N=n+2$ full-grid points, the FNO padding rules are fixed $p=7$, fractional $p=\operatorname{round}(7(N-1)/32)$, and fixed-period $p=\operatorname{round}(40(N-1)/32)-N$. They coincide on the source grid but differ on refined grids. Only predeclared variants are compared; a favorable target-grid result is not used to pick a deployment interface retroactively. Fine-grid subsets contain 8/4 tasks per stratum at $n=63/127$, compared with 16 at $n=31$; common-grid conclusions use their intersection.

\begin{table}[h]
\centering\small
\caption{All failed expanded-wave main endpoints. All other family/grid/corrector/tolerance groups pass; failures remain in their original denominators.}
\begin{tabular}{llrrr}\toprule
Equation & Corrector & Tolerance & Failed & Total\\\midrule
Advection--diffusion & GMRES & $10^{-8}$&953&2336\\
Advection--diffusion & GMRES & $10^{-6}$&476&2336\\
Advection--diffusion & GMRES & $10^{-4}$&132&2336\\
Diffusion & CG & $10^{-8}$&1009&2336\\
Diffusion & CG & $10^{-6}$&21&2336\\\bottomrule
\end{tabular}
\end{table}
These groups all use $n=127$. The 2591 failed endpoints are capped outcomes, not missing observations. Maximum relative discrete errors among accepted endpoints at tolerances $10^{-8},10^{-6},10^{-4}$ are respectively $1.03936\times10^{-8}$, $1.12663\times10^{-6}$, and $1.16954\times10^{-4}$. A residual threshold is not an identical relative-solution-error threshold.

\section{Adaptation and target-information accounting}
\label{app:adaptation}
The expanded adaptation study starts from the initial-wave source FNO checkpoints, not the newly trained 6000-step states. Both PDE families use nested target-training counts 16/64/256, a shared 64-example validation set, and 128 independent test tasks. Source 0 and source 1 pretrained arms each use three optimizer seeds; a source-independent scratch arm is trained once per family, training-sample count, learning rate, and optimizer seed and is not duplicated for the source comparisons. Learning rates are $10^{-3}$ and $3\times10^{-4}$. Validation every 25 updates, including step 0, selects the best-so-far checkpoint at maximum budgets 200/800/1600 and then selects the learning rate. Test data do not choose either. The 108 actual trajectories give multiple dependent budget estimates.

The frozen source predictor is a separate control, also available as step 0 for pretrained validation selection. Scratch and pretrained arms use identical target observations and validation opportunities. Reporting 16 training examples without the 64 validation labels would conceal part of the allowed target information. Relative to the earlier fixed-step wave, changed test tasks and validation size prevent attributing cross-wave improvement solely to early stopping. Within-trajectory information/stopping interventions are listed separately from this historical comparison.

\section{Spectral constructions and their limits}
\label{app:spectral}
\subsection{An explicit three-norm counterexample}
Let $A=\operatorname{diag}(1,2,3,4)$ and let two error vectors have squared coordinates $w=(1/2,0,1/2,0)$ and $v=(1/3,1/2,0,1/6)$. Their squared Euclidean, energy, and residual norms are respectively $\sum w_i$, $\sum\lambda_iw_i$, and $\sum\lambda_i^2w_i$. Direct substitution gives $(1,2,5)$ for both vectors. In exact arithmetic a polynomial $p$ with $p(0)=1$ must vanish at each eigenvalue carrying nonzero initial error to give exact convergence. The first support has two distinct eigenvalues; the second has three. Positive spectral weights rule out earlier exact convergence, so CG takes two versus three updates. This elementary construction applies to exact termination; finite-tolerance and finite-precision behavior additionally depend on the stopping rule and arithmetic \citep{cg1952,saad2003,carson2024}.

\subsection{Actual-matrix oracle interventions}
The 32 new joint diffusion tasks at $n=31$ use all 12 initial-wave FNO states (two source datasets, three optimizer seeds, two objectives). We diagonalize either $A$ or the symmetric Poisson-transformed operator. Controls preserve Euclidean spectral mass while changing direction, reverse spectral placement, or change coefficient signs. Three-moment linear programs constrain nonnegative spectral weights to preserve $\sum w_i$, $\sum\lambda_iw_i$, and $\sum\lambda_i^2w_i$, while maximizing or minimizing a higher-order diagnostic. Basic feasible solutions concentrate mass on at most three spectral atoms. Rapid Krylov termination is therefore expected; this construction identifies insufficiency of the three summaries rather than discovering a practical initializer.

Exact references and full eigendecompositions are used only in this oracle diagnostic. They must not be used as deployable selection features. Factors 0.5/1/2 and the three tolerances yield 41472 planned records. One intervention fails its numerical moment-matching gate (relative defect about 0.2006), so its nine follow-ups are not executed; the actual total is 41463. This is recorded as a rejected numerical construction, not a claim that the mathematical constraints have no feasible solution. Successful matching has maximum relative defect below $4.12\times10^{-13}$. The sign control preserves exact spectral magnitudes but exhibits small finite-precision differences; in transformed PCG coordinates it also need not preserve the physical Euclidean stopping residual. More explicitly, for an SPD preconditioner $P$, let $B=P^{-1/2}AP^{-1/2}$ and $z=P^{1/2}e$. Then $\|z\|_2^2=e^\top Pe$, $\|z\|_B^2=e^\top Ae$, and $\|Bz\|_2^2=r^\top P^{-1}r$. Only the energy quantity is necessarily the physical $A$-energy error; changing transformed spectral signs can change the physical residual even in exact arithmetic. The corresponding arguments do not transfer to nonnormal restarted GMRES, whose behavior has additional structure \citep{greenbaum1996}.

\section{Nonlinear implementation and a standard stability bound}
\label{app:nonlinear-implementation}
For $F_h(u)=\nu L_hu+\gamma u^{\odot3}-f$ with zero Dirichlet data, $\nu>0$ and $\gamma\geq0$, the exact Jacobian is $\nu L_h+3\gamma\operatorname{diag}(u^2)$. The banded solve is used inside residual-norm Armijo-damped Newton with 40 updates, at most 26 line-search attempts per update, halving, and Armijo constant $10^{-4}$. A zero-initialized strict reference instead uses 100 updates and tolerance $10^{-11}$. Independent adaptive continuous collocation uses residual tolerance $10^{-7}$, boundary tolerance $10^{-10}$, and at most 50000 nodes; it is a numerical crosscheck. Refined-grid comparisons use $2n+1$ interior points.

\begin{proposition}[Standard discrete strong-monotonicity estimate]
For the one-dimensional Dirichlet second-difference operator and $h=1/(n+1)$, $F_h$ is strongly monotone with $m_h=4\nu h^{-2}\sin^2(\pi h/2)$. If $u_h^*$ is its zero, then $\norm{u-u_h^*}_2\leq\norm{F_h(u)}_2/m_h$.
\end{proposition}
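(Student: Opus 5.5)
The plan is to bound the monotonicity of the linear part from below by its smallest eigenvalue, show that the cubic part only increases monotonicity, and then obtain the error estimate from Cauchy--Schwarz at the root. Let $L_h$ be the symmetric tridiagonal matrix $h^{-2}\operatorname{tridiag}(-1,2,-1)$. Its eigenvectors are the discrete sines $s_k$ with $(s_k)_j=\sin(k\pi jh)$, $j=1,\dots,n$, and its eigenvalues are
\[
 \mu_k=4h^{-2}\sin^2(k\pi h/2),\qquad k=1,\ldots,n.
\]
The eigenvector claim is a short trigonometric identity; I would quote it as standard. Because $0<k\pi h/2<\pi/2$ and $\sin^2$ increases on that interval, the smallest eigenvalue is $\mu_1$. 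Symmetry then gives $x^\top\nu L_hx\geq\nu\mu_1\norm x_2^2=m_h\norm x_2^2$ for every $x$.

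Next I would prove strong monotonicity in the form
\[
 \bigl(F_h(u)-F_h(w)\bigr)^\top(u-w)\geq m_h\norm{u-w}_2^2 .
\]
The forcing $f$ cancels, and the linear part contributes at least $m_h\norm{u-w}_2^2$ by the eigenvalue bound. The cubic part splits over components, so it is enough to show $(s^3-t^3)(s-t)\geq0$ for real $s,t$. This holds because
\[
 s^3-t^3=(s-t)(s^2+st+t^2),\qquad s^2+st+t^2=(s+t/2)^2+3t^2/4\geq0 .
\]
Multiplying by $\gamma\geq0$ keeps the term nonnegative.

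For the error estimate, set $w=u_h^*$, so $F_h(w)=0$. Monotonicity and Cauchy--Schwarz give $m_h\norm{u-u_h^*}_2^2\leq F_h(u)^\top(u-u_h^*)\leq\norm{F_h(u)}_2\norm{u-u_h^*}_2$. If $u\neq u_h^*$, dividing by $m_h\norm{u-u_h^*}_2$ gives the bound, and the case $u=u_h^*$ is trivial. The same inequality shows any zero is unique. The statement presupposes that a zero exists, so I would not need it. If I wanted to justify it, I would argue that $F_h$ is continuous and strongly monotone, hence coercive, so Minty--Browder, or equivalently minimizing the strictly convex potential $\tfrac\nu2u^\top L_hu+\tfrac\gamma4\sum u_j^4-f^\top u$, yields one.

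The main obstacle is only bookkeeping. The boundary convention has to match the eigenvalue formula: homogeneous Dirichlet data makes $L_h$ exactly the $n\times n$ tridiagonal matrix, with $h=1/(n+1)$ giving the sine eigenvectors. I would also note explicitly that the bound concerns distance to the discrete root only, not Newton work or continuum error, as the paper stresses.
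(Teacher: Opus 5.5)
Your proposal is correct and follows essentially the same route as the paper: you split the monotonicity pairing into $\nu d^\top L_h d$, bounded below by the smallest eigenvalue $4h^{-2}\sin^2(\pi h/2)$, plus a componentwise nonnegative cubic sum, then apply Cauchy--Schwarz at the root, and you obtain existence and uniqueness of the root from the strictly convex, coercive discrete potential. The only difference is that you also write out details the paper merely asserts, namely the ordering of the eigenvalues and the factorization $s^3-t^3=(s-t)(s^2+st+t^2)$.
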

\begin{proof}
For $d=u-v$, $d^\top(F_h(u)-F_h(v))=\nu d^\top L_hd+\gamma\sum_i(u_i-v_i)(u_i^3-v_i^3)$. Each term in the sum is nonnegative and the smallest eigenvalue of $L_h$ is $4h^{-2}\sin^2(\pi h/2)$. Thus the expression is at least $m_h\norm{d}_2^2$. Set $v=u_h^*$ and use Cauchy--Schwarz; the case $d=0$ is immediate. Strict convexity and coercivity of the discrete potential $\nu u^\top L_hu/2+\gamma\sum_i u_i^4/4-f^\top u$ also establish a unique root.
\end{proof}
The estimate deteriorates as $\nu$ tends to zero and controls distance to a discrete root, not the continuum discretization error. If a numerical reference $v$ has nonzero residual, the same proof gives $\norm{u-v}\leq(\norm{F_h(u)}+\norm{F_h(v)})/m_h$. We do not treat a finite-precision reference as an exact zero when comparing numerical bounds. Nonmonotone reaction terms or multiple-solution branch selection require different assumptions and experiments.

The nonlinear source distribution has $\nu\in[0.05,0.2]$, $\gamma\in[0.5,2]$ sampled logarithmically, an offset forcing in $[-0.5,0.5]$, amplitude in $[0.5,1.5]$, and a normalized sine series with modes 1--8 and $k^{-2}$ decay. Shift strata use $\nu\in[0.002,0.01]$, $\gamma\in[8,32]$, modes 1--24 with $k^{-0.25}$ decay, their joint change, the linear limit $\gamma=0$, and an extreme stratum $\nu\in[5\times10^{-5},2\times10^{-4}]$, $\gamma\in[32,128]$ with rough forcing. Each has 16 physical tasks. The one-dimensional FNO uses width 24, modes 12, depth 3, right padding of one quarter-domain, exact zero boundary values, and input forcing, $\log(\nu/0.1)$, $\log(1+\gamma)$, and coordinate. The 512/96 source pool is common across three seeds and two objectives. Adam uses batch 32,1200 cosine-schedule updates, and fixed physics weight $10^{-3}$. Validation chooses the checkpoint; this is not an exhaustive physics-weight search.

\section{Cost accounting and incomplete runs}
\label{app:cost}
The original serial cost suite uses old source 0/seed 11 data and physics models,32 tasks per family, three grids, three tolerances, thirteen paths, and five randomized repetitions after warmup. This produces 37440 timing records. Timing includes assembly, encoding, transfers, inference, requested preconditioner setup, correction, and true-residual verification, and excludes field sampling, offline truth, training, and model loading. Two of 192 task-grid units were recovered on another allocation after the original one-hour time limit. Their measurements remain separately identifiable. The 30-task versus32-task sensitivity comparison is retained; changes in task composition and node cannot be disentangled by that comparison.

An expectation of total cost uses means, not an unqualified sum of medians. Table entries obtained by averaging within-task repetition medians are descriptive timing summaries; query-count curves formed from those measurements are scenario projections rather than unbiased expected-cost estimators.

Formal time-to-solution summaries require all prescribed repetitions to pass the corresponding acceptance rule. A median over successful cases alone is separately labeled and is not substituted for a full-workload timing estimate. Warm resident, process-cold, variable-matrix batches, and cached fixed-matrix solves answer different questions; their denominators and timing boundaries must not be pooled. GPU inference with a CPU sparse corrector does not establish the optimal implementation of either classical or learned methods. Training cost, setup cost, per-query cost, and failure probability are separately relevant to a lifecycle comparison.

\section{Coverage ledger and extension protocols}
\label{app:extensions}
The accompanying coverage ledger records execution and verification separately. The deployment, target-information, nested source-count, and nonlinear long-budget protocols below have complete result receipts, reported in the results appendices. The scaled NOWS branch also has a complete evaluation receipt and is reported in Appendix~\ref{app:nows-scaled}. Boundaries outside these protocols remain explicitly untested.

\paragraph{6000-update states and deployment workloads.}
The new warm protocol uses 24 source-selected states per family: two source datasets, three seeds, FNO/CNN, and data/positive-residual objectives at 6000 updates. Twelve independent joint tasks per family are paired across grids 31/63/127 with three blocked repetitions. Primary correction is Poisson-PCG or Poisson-GMRES; direct, zero, coarse, and zero with AMG/ILU supply controls. A 255-grid extension uses four new tasks per equation and source 0/seed 11 states. Matrices, encodings, transfers, initialization, setup, correction, and true residual checks are charged.

Variable-matrix batch 8 and batch 32 share the first eight task identities and use genuine batched neural inference. Two independent batches per family/grid limit precision of any throughput conclusion. Fixed-matrix workloads use four matrices per family/grid and eight right-hand sides per matrix, allowing classical direct-factorization, coarse-factorization, and preconditioner reuse. Setup is measured separately. $Q\in\{1,10,100,1000\}$ amortization curves are projections from measured costs, not 1000 physically observed solves. Within-process weight reloading and a separate fresh-interpreter/CUDA-process experiment distinguish two kinds of cold start; the latter also includes field generation and does not flush operating-system caches.

\paragraph{Input interventions and deployable selectors.}
Eight common base fields per family undergo joint shift, coefficient-log multipliers 0.25/2/3, forcing multipliers 0.01/100/$-1$, and zero forcing. These interventions are paired, not 64 independent fields. An exact-zero right-hand side uses absolute residual tolerance $10^{-12}$. Selectors compare initial residual or five-update probe performance among zero, coarse, and a data FNO, without a reference solution. Every discarded candidate and probe is charged; continuation updates alone are not total work. Thirty-two independent target tasks per family are reserved for this fixed policy. It is a bounded selector comparison, not a fitted predictor of universal neural usefulness.

\paragraph{Equal information and source-data control.}
One 80-label target pool per equation is split 16/64, 40/40, or 64/16 into training/validation; tasks can change role across splits but never overlap within a split. Two pretrained source states and scratch, three seeds, two learning rates, and both equations produce 108 actual trajectories. Fixed-final and best-validation checkpoints are saved from the same trajectories at 200/800/1600 updates. Within-learning-rate contrasts isolate checkpoint policy; independently validation-selected rates compare a broader policy. 128 test tasks per equation are kept separate from all selection.

A source-count extension uses the exact nested 128/384/1024 prefix of one source pool, fixed 64 validation tasks, and fixed 6000 updates, with two equations, three seeds, and data/residual objectives. Twelve 384-sample arms reuse existing checkpoints after exact configuration checks; 24 arms are new trajectories. This removes the original data-realization/count confounding for one pool, but holding updates fixed changes expected sample exposures. It is not a universal data-scaling law.

\paragraph{Scaled official NOWS branch.}
The independent wrapper imports the official Darcy FNO and VINO training function \citep{nows2026}, retaining width 32, 32 modes, four blocks, the energy objective, zero boundary ansatz, and official finite-difference matrix. It reduces grid 256 to 64, training samples 1000 to 256, and epochs 2000 to 100. Three seeds share one source pool; source references are logged but do not enter the energy gradient or final-epoch selection. ID, contrast, and roughness each reserve 16 new tasks, grids 64/128, two tolerances, and direct/zero/coarse/learned correction paths. This is a reduced-budget native-code audit, not a full-paper replication. All three training states, bridge checks, and 96 task/grid units are complete; the 9216 formal timing records retain their host identities.

\paragraph{Boundaries outside this empirical matrix.}
Geometry transfer, unstructured meshes, same-weight cross-equation transfer, nonmonotone branch selection, long-time rollouts, calibrated inverse posteriors, and language-model-generated solvers are represented in the literature map. They require different task and error definitions; neither linear correction experiments nor the unique-root nonlinear extension tests them. Their absence is a scope limitation rather than a negative empirical result.

\section{Versioned protocol records from the earlier evidence audit}
\label{app:protocol-records}
The following 18 records from 12 papers preserve the earlier targeted extraction and its version-specific locators. They are separate from the expanded 104-record bibliographic coverage. They describe controls in the cited papers. Updated publication metadata do not silently change the version underlying a protocol locator.
\begingroup\small\setlength{\tabcolsep}{5pt}
\begin{longtable}{@{}>{\raggedright\arraybackslash}p{.065\linewidth}>{\raggedright\arraybackslash}p{.285\linewidth}>{\raggedright\arraybackslash}p{.285\linewidth}>{\raggedright\arraybackslash}p{.285\linewidth}@{}}
\caption{Versioned, targeted full-text protocol records. Controls described here belong to the cited papers.}\label{tab:ledger}\\
\toprule Record & Source locator & Retained object / target process & Existing evidence and scope \\ \midrule\endfirsthead
\toprule Record & Source locator & Retained object / target process & Existing evidence and scope \\ \midrule\endhead
P01 & \citep{pino2021}. 2111.03794v4; Sec.3.1; Table 1; Appendix C & operator weights. coefficient or initial field; prescribed PDE; forward prediction. & data-only versus data-plus-PDE training. not a uniform claim across equations. \\[4pt]
P02 & \citep{pino2021}. 2111.03794v4; Sec.3.2; Appendix C & weights and optional pretrained anchor. target residual and conditions; instance optimization. & predictions before/after optimization. anchor state must be included in a reset. \\[4pt]
P03 & \citep{pino2021}. 2111.03794v4; Table 4 & operator pretraining. target Kolmogorov equation; target optimization. & zero versus nonzero source training budgets. not an all-budget matched-protocol effect. \\[4pt]
P04 & \citep{hints2022}. 2208.13273v2; Sec.2.2; Fig.2; Sec.2.5 & DeepONet correction. target operator and residual; alternating neural and numerical updates. & relaxation; one-time initializer; multigrid comparisons. spectral complementarity is established prior work. \\[4pt]
P05 & \citep{fcg2024}. 2402.05598v1; Sec.3.3; Tables 1--3 & neural nonlinear preconditioner. matrix and residual; flexible conjugate gradients. & classical preconditioners; loss and residual-sampling ablations. iterations are not complete runtime accounting. \\[4pt]
P06 & \citep{fcg2024}. 2402.05598v1; Sec.3.4 & low-resolution learned preconditioner. higher-resolution discrete target; flexible CG. & resolution and architecture comparisons. not a new physical equation family. \\[4pt]
P07 & \citep{metapde2022}. 2211.01604v1; Sec.2--3; Appendix C (Burgers) & meta-learned initialization. target residual; geometry; conditions; gradient-based adaptation. & MAML/LEAP and numerical comparisons. hardware and accuracy regimes condition utility. \\[4pt]
P08 & \citep{learnedsolver2024}. 2410.06820v5; Algorithm 1; Sec.4.3; Fig.2; Appendix F & update network; B-spline representation. PDE parameters and physical-loss gradients; updates of target coefficients. & Adam, SGD and L-BFGS in the same representation. an update count is not a gradient-call count. \\[4pt]
P09 & \citep{poseidon2024}. 2405.19101v2; Appendix C.1--C.2; Table 5 & pretrained ScOT weights. downstream training examples; fine-tuning. & ScOT-from-scratch included. 40M scratch is not size-identical to every scale. \\[4pt]
P10 & \citep{poseidon2024}. 2405.19101v2; Appendix C.1; downstream configurations & pretrained weights; some reinitialized layers. training and validation data; checkpoint selection and adaptation. & lowest-validation-loss checkpoint rule. sample efficiency alone is not a matched time effect. \\[4pt]
P11 & \citep{icon2023}. 2304.07993v3; Algorithm 2; Sec.3.4 & frozen conditional operator. demonstration condition--output pairs; query; contextual forward inference. & number and type of demonstrations. not observation-free prediction. \\[4pt]
P12 & \citep{splitting2026}. 2602.00884v2; Sec.3; Sec.4; Table 1 & operator dictionary. observed target trajectory prefix; operator-composition search. & original dictionary predictor; uniform/beam variants. observations and search remain target resources. \\[4pt]
P13 & \citep{pidm2024}. 2403.14404v4; Eq.14; Sec.4 Darcy experiments & diffusion prior. sampling conditions under the stated task; denoising. & physics-residual training variants. residual consistency does not certify a posterior. \\[4pt]
P14 & \citep{diffusionpde2024}. 2406.17763v2; Sec.3; Eq.8; ablation section & joint diffusion prior. sparse observations; governing equation; observation and PDE-guided sampling. & guidance component ablations. do not assign all accuracy to prior learning. \\[4pt]
P15 & \citep{evopinn2026}. 2607.26490v1; Sec.5.1--5.3; Appendix D & discovered executable algorithm. target PDE; search-time evaluation when discovering; freeze algorithm; retrain solution network. & expert designs and matched search alternatives. discovery and per-task training have separate costs. \\[4pt]
P16 & \citep{evopinn2026}. 2607.26490v1; Sec.5.6; Table 5 & frozen solver program. neighboring PDE parameter; solution network retraining. & seed and expert algorithms. zero-shot refers to algorithm search, not solution training. \\[4pt]
P17 & \citep{williams2024}. 2411.18459v1; Sec.2.4; Sec.3.1--3.2 & extracted trunk basis. initial condition and PDE; spectral coefficient evolution. & data-driven and physics-informed bases. not intrinsic semantic decoding of all weights. \\[4pt]
P18 & \citep{williams2024}. 2411.18459v1; Sec.2.2; Sec.3.2.2--3.3 & initialization from related problems. target physical training constraints; further neural training. & training from related sources. basis quality and target training must be distinguished. \\[4pt]
\bottomrule\end{longtable}
\endgroup

\FloatBarrier
\section{Complete deployment results and sampling units}
\label{app:deployment-details}
The completed warm, batching, caching, weight-reload, boundary and selector audit contains 344 verified units and 15,768 JSON records. A separate fresh-process suite adds 16 units and 240 records, giving 360 units and 16,008 JSON records in total. These include 504 batch records with 10,080 accepted individual queries. Including fresh-process solves, query-expanded accounting gives 25,584 final-query records, 142 of which fail; repetitions and re-discretizations remain dependent. The main serial suite has 12 physical tasks per equation, reported separately by source dataset after averaging three optimizer seeds within task. Other suites have their own clusters: four matrices per equation for caching, two groups per equation and batch size for batching, eight paired base fields per equation for boundary interventions, and 32 tasks per equation for selectors. None of these groups is pooled into a common independent-sample count.

Timings are conditional on this implementation: GPU neural inference, single-thread CPU sparse correction, and true-residual verification. The classical alternative is AMG-PCG for diffusion or ILU-GMRES for advection; every neural and coarse path in these tables uses the primary Poisson corrector. Classical columns repeated across source rows are the same observations. Reported table entries are task means after within-task repetition medians and, where applicable, optimizer-seed averaging; the complete CSVs additionally retain task medians, distributions, failures, and paired intervals. Thus they need not equal medians from the earlier timing wave.

For a cached matrix, setup is charged once over the eight actually measured right-hand sides. Curves at 1, 10, 100, and 1000 queries are projections from measured setup and per-query cost, not newly executed workloads or training-cost break-even estimates. Source data generation, reference labels, training, and selection search are not zero-cost; their complete attribution to a deployment contract is unavailable, so inference-side comparisons do not imply lifecycle superiority.

The residual selector forms zero, coarse, and neural candidates before choosing. The five-update probe performs an independent short solve for each candidate, then restarts the numerical solver from the selected field. It does not continue an unchanged CG search direction or GMRES basis. Reported work and time include all discarded probes plus the restarted solve. Merely including zero among candidates is not a theorem that selected cost or capped acceptance cannot deteriorate.

All 142 failed boundary records occur under coefficient-log scale three. These are limitations of the fixed Poisson-preconditioned, capped paths under this intervention; they do not establish that stronger preconditioners must fail. Direct solution accepts every case. An exact zero right-hand side uses absolute tolerance; zero-field acceptance does not establish neural predictor homogeneity.

\begin{table}[t]
\centering\small\setlength{\tabcolsep}{3pt}
\resizebox{\linewidth}{!}{%
\begin{tabular}{lllllllllll}
\toprule
PDE & $n$ & Source & Direct & Zero & Coarse & Alt. & FNO-D & FNO-P & CNN-D & CNN-P \\
\midrule
Diff. & 31 & 0 & 1.79 & 5.51 & 5.87 & 10.33 & 7.32 & 7.22 & 6.09 & 6.01 \\
Diff. & 31 & 1 & 1.79 & 5.51 & 5.87 & 10.33 & 7.31 & 7.23 & 6.13 & 6.02 \\
Diff. & 63 & 0 & 8.63 & 10.23 & 11.32 & 17.73 & 12.04 & 11.90 & 11.01 & 11.05 \\
Diff. & 63 & 1 & 8.63 & 10.23 & 11.32 & 17.73 & 12.02 & 11.89 & 11.07 & 11.08 \\
Diff. & 127 & 0 & 40.87 & 28.57 & 33.46 & 46.19 & 30.36 & 29.89 & 30.32 & 30.42 \\
Diff. & 127 & 1 & 40.87 & 28.57 & 33.46 & 46.19 & 30.29 & 29.92 & 30.34 & 30.53 \\
Diff. & 255 & 0 & 228.48 & 115.88 & 141.59 & 159.52 & 119.29 & 116.20 & 124.53 & 123.74 \\
Adv. & 31 & 0 & 2.33 & 19.79 & 20.06 & 3.53 & 21.82 & 21.21 & 19.92 & 19.75 \\
Adv. & 31 & 1 & 2.33 & 19.79 & 20.06 & 3.53 & 21.71 & 21.38 & 19.95 & 19.72 \\
Adv. & 63 & 0 & 8.04 & 37.97 & 38.61 & 30.22 & 40.23 & 39.34 & 38.44 & 38.64 \\
Adv. & 63 & 1 & 8.04 & 37.97 & 38.61 & 30.22 & 40.28 & 39.54 & 38.65 & 38.59 \\
Adv. & 127 & 0 & 41.14 & 100.01 & 100.32 & 426.07 & 103.51 & 102.22 & 101.71 & 103.62 \\
Adv. & 127 & 1 & 41.14 & 100.01 & 100.32 & 426.07 & 104.05 & 102.40 & 101.65 & 102.31 \\
Adv. & 255 & 0 & 232.33 & 479.78 & 490.47 & 4446.59 & 480.09 & 472.22 & 489.54 & 492.68 \\
\bottomrule
\end{tabular}}
\caption{Full warm-path milliseconds per accepted query, using source-selected 6000-update states. D/P denote data/positive-residual objectives. Classical controls repeat across source-dataset rows for readability; they are not new runs. At grids 31/63/127, three optimizer seeds are averaged within each of 12 physical tasks. Grid 255 uses four new tasks and source 0, seed 11. Every listed endpoint succeeds. The primary corrector is Poisson-PCG or Poisson-GMRES; the alternate zero control uses AMG or ILU.}
\label{tab:deploy-warm_all_grids}
\end{table}

\begin{table}[t]
\centering\small\setlength{\tabcolsep}{3pt}
\resizebox{\linewidth}{!}{%
\begin{tabular}{llllllllll}
\toprule
PDE & $n$ & Batch & Direct & Zero & Coarse & FNO-D & FNO-P & CNN-D & CNN-P \\
\midrule
Diff. & 31 & 8 & 1.69 & 5.54 & 5.83 & 5.73 & 5.59 & 5.48 & 5.37 \\
Diff. & 31 & 32 & 1.66 & 5.32 & 5.65 & 5.27 & 5.26 & 5.17 & 5.10 \\
Diff. & 63 & 8 & 7.27 & 10.43 & 11.32 & 10.63 & 10.47 & 10.69 & 10.55 \\
Diff. & 63 & 32 & 7.26 & 10.04 & 11.03 & 10.12 & 9.91 & 10.27 & 10.21 \\
Diff. & 127 & 8 & 39.56 & 29.42 & 33.63 & 29.70 & 29.23 & 30.91 & 30.97 \\
Diff. & 127 & 32 & 40.59 & 28.81 & 33.09 & 29.26 & 28.63 & 30.49 & 30.17 \\
Adv. & 31 & 8 & 2.19 & 21.43 & 21.87 & 21.79 & 21.32 & 21.04 & 20.62 \\
Adv. & 31 & 32 & 2.17 & 21.20 & 21.68 & 21.19 & 20.69 & 20.56 & 20.20 \\
Adv. & 63 & 8 & 7.91 & 41.97 & 42.42 & 42.69 & 41.85 & 41.77 & 42.14 \\
Adv. & 63 & 32 & 7.88 & 41.49 & 41.79 & 41.62 & 40.56 & 41.31 & 41.56 \\
Adv. & 127 & 8 & 40.08 & 108.04 & 110.26 & 110.12 & 107.42 & 109.37 & 113.30 \\
Adv. & 127 & 32 & 40.82 & 107.14 & 108.91 & 107.54 & 105.90 & 108.26 & 111.18 \\
\bottomrule
\end{tabular}}
\caption{Variable-matrix batched inference: milliseconds per accepted query. Two independent batch groups per PDE/grid/size, source 0 and seed 11, three repeated measurements. Batch 8 is the first-eight subset of batch 32; the sizes are paired, not independent cohorts. All 10080 query results pass. Two groups provide only a narrow descriptive throughput check.}
\label{tab:deploy-batch_all_grids}
\end{table}

\begin{table}[t]
\centering\small\setlength{\tabcolsep}{3pt}
\resizebox{\linewidth}{!}{%
\begin{tabular}{llllll}
\toprule
PDE & Control & ms saved & 95\% interval & Control/learned & Groups \\
\midrule
Diff. & zero & 0.182 & [0.055, 0.310] & 1.0063 & 2 \\
Diff. & coarse & 4.459 & [4.310, 4.609] & 1.1557 & 2 \\
Diff. & direct & 11.958 & [11.652, 12.263] & 1.4176 & 2 \\
Adv. & zero & 1.242 & [0.922, 1.563] & 1.0118 & 2 \\
Adv. & coarse & 3.006 & [2.861, 3.150] & 1.0284 & 2 \\
Adv. & direct & -65.078 & [-65.237, -64.918] & 0.3853 & 2 \\
\bottomrule
\end{tabular}}
\caption{Batch-32, grid-127, source-0/seed-11 physics-FNO contrasts. Time saved is control minus learned time, in milliseconds per query. Speed ratios are paired geometric means. The bootstrap uses only two batch groups: these are descriptive intervals, not reliable population-level significance evidence. Every listed method solves every query.}
\label{tab:deploy-batch_positive_controls}
\end{table}

\begin{table}[t]
\centering\small\setlength{\tabcolsep}{3pt}
\resizebox{\linewidth}{!}{%
\begin{tabular}{llllllllllll}
\toprule
PDE & $n$ & $Q$ & Status & Direct & Zero & Coarse & Alt. & FNO-D & FNO-P & CNN-D & CNN-P \\
\midrule
Diff. & 31 & 1 & projection & 1.89 & 5.68 & 6.15 & 10.52 & 7.51 & 7.45 & 6.33 & 6.28 \\
Diff. & 31 & 8 & observed & 0.33 & 5.30 & 5.22 & 3.88 & 7.14 & 7.07 & 5.96 & 5.90 \\
Diff. & 31 & 10 & projection & 0.29 & 5.29 & 5.19 & 3.69 & 7.13 & 7.06 & 5.95 & 5.89 \\
Diff. & 31 & 100 & projection & 0.13 & 5.25 & 5.10 & 3.01 & 7.08 & 7.02 & 5.91 & 5.85 \\
Diff. & 31 & 1000 & projection & 0.11 & 5.25 & 5.09 & 2.94 & 7.08 & 7.02 & 5.90 & 5.85 \\
Diff. & 63 & 1 & projection & 8.49 & 10.42 & 11.57 & 17.70 & 12.34 & 12.17 & 11.41 & 11.38 \\
Diff. & 63 & 8 & observed & 1.40 & 9.82 & 9.41 & 7.68 & 11.74 & 11.57 & 10.82 & 10.79 \\
Diff. & 63 & 10 & projection & 1.19 & 9.81 & 9.35 & 7.39 & 11.73 & 11.55 & 10.80 & 10.77 \\
Diff. & 63 & 100 & projection & 0.47 & 9.75 & 9.13 & 6.36 & 11.67 & 11.49 & 10.74 & 10.71 \\
Diff. & 63 & 1000 & projection & 0.39 & 9.74 & 9.10 & 6.26 & 11.66 & 11.49 & 10.74 & 10.70 \\
Diff. & 127 & 1 & projection & 40.25 & 29.26 & 34.29 & 45.95 & 31.04 & 30.68 & 31.29 & 31.23 \\
Diff. & 127 & 8 & observed & 6.46 & 27.75 & 25.92 & 23.51 & 29.59 & 29.23 & 29.84 & 29.76 \\
Diff. & 127 & 10 & projection & 5.50 & 27.70 & 25.68 & 22.87 & 29.55 & 29.19 & 29.80 & 29.72 \\
Diff. & 127 & 100 & projection & 2.02 & 27.55 & 24.82 & 20.57 & 29.40 & 29.04 & 29.65 & 29.57 \\
Diff. & 127 & 1000 & projection & 1.67 & 27.53 & 24.73 & 20.34 & 29.39 & 29.03 & 29.64 & 29.56 \\
Adv. & 31 & 1 & projection & 2.42 & 23.91 & 23.93 & 3.53 & 25.30 & 25.02 & 23.65 & 23.32 \\
Adv. & 31 & 8 & observed & 0.40 & 23.07 & 22.16 & 1.03 & 24.46 & 24.14 & 22.81 & 22.47 \\
Adv. & 31 & 10 & projection & 0.34 & 23.04 & 22.10 & 0.96 & 24.44 & 24.11 & 22.78 & 22.45 \\
Adv. & 31 & 100 & projection & 0.13 & 22.96 & 21.92 & 0.70 & 24.35 & 24.03 & 22.69 & 22.36 \\
Adv. & 31 & 1000 & projection & 0.11 & 22.95 & 21.90 & 0.67 & 24.34 & 24.02 & 22.68 & 22.36 \\
Adv. & 63 & 1 & projection & 8.23 & 46.56 & 46.47 & 26.74 & 47.84 & 47.35 & 46.32 & 46.63 \\
Adv. & 63 & 8 & observed & 1.36 & 45.39 & 43.33 & 17.44 & 46.66 & 46.16 & 45.12 & 45.45 \\
Adv. & 63 & 10 & projection & 1.17 & 45.35 & 43.24 & 17.18 & 46.62 & 46.12 & 45.09 & 45.42 \\
Adv. & 63 & 100 & projection & 0.46 & 45.23 & 42.92 & 16.22 & 46.50 & 46.00 & 44.97 & 45.30 \\
Adv. & 63 & 1000 & projection & 0.39 & 45.22 & 42.89 & 16.13 & 46.49 & 45.99 & 44.95 & 45.28 \\
Adv. & 127 & 1 & projection & 40.66 & 125.13 & 123.71 & 418.75 & 126.51 & 123.74 & 125.65 & 126.92 \\
Adv. & 127 & 8 & observed & 6.54 & 122.56 & 114.03 & 379.42 & 123.92 & 121.14 & 123.10 & 124.37 \\
Adv. & 127 & 10 & projection & 5.56 & 122.48 & 113.76 & 378.30 & 123.85 & 121.07 & 123.03 & 124.29 \\
Adv. & 127 & 100 & projection & 2.04 & 122.22 & 112.76 & 374.29 & 123.58 & 120.81 & 122.77 & 124.02 \\
Adv. & 127 & 1000 & projection & 1.69 & 122.19 & 112.66 & 373.89 & 123.55 & 120.78 & 122.74 & 123.99 \\
\bottomrule
\end{tabular}}
\caption{Fixed-matrix costs in milliseconds per query. Each of four matrix tasks has eight measured right-hand sides. Setup is counted once per matrix and repeats are summarized before averaging matrices. Q=8 is the observed workload; all other Q values are projections of setup/Q plus the measured mean query cost. These are inference-side projections and exclude training cost.}
\label{tab:deploy-cached_all_grids}
\end{table}

\begin{table}[t]
\centering\small\setlength{\tabcolsep}{3pt}
\resizebox{\linewidth}{!}{%
\begin{tabular}{lllllll}
\toprule
PDE & Method & ms/query & Continuation & Probe & Total updates & Accepted \\
\midrule
Diff. & direct & 8.80 & 0.00 & 0.00 & 0.00 & 32/32 \\
Diff. & zero & 10.22 & 60.38 & 0.00 & 60.38 & 32/32 \\
Diff. & coarse & 11.26 & 54.53 & 0.00 & 54.53 & 32/32 \\
Diff. & fno\_data & 12.06 & 59.19 & 0.00 & 59.19 & 32/32 \\
Diff. & min\_residual & 14.25 & 60.38 & 0.00 & 60.38 & 32/32 \\
Diff. & probe5 & 15.30 & 49.59 & 15.00 & 64.59 & 32/32 \\
Adv. & direct & 9.11 & 0.00 & 0.00 & 0.00 & 32/32 \\
Adv. & zero & 39.84 & 91.75 & 0.00 & 91.75 & 32/32 \\
Adv. & coarse & 40.34 & 88.25 & 0.00 & 88.25 & 32/32 \\
Adv. & fno\_data & 41.89 & 92.97 & 0.00 & 92.97 & 32/32 \\
Adv. & min\_residual & 44.43 & 91.75 & 0.00 & 91.75 & 32/32 \\
Adv. & probe5 & 45.54 & 86.34 & 15.00 & 101.34 & 32/32 \\
\bottomrule
\end{tabular}}
\caption{Complete selector costs on 32 independent tasks per PDE, grid 63. Probe work includes discarded candidate updates; continuation alone is not total iterative work. Direct-solve update counts are bookkeeping zeros and are not comparable Krylov complexity. All candidate construction, inference, probes, continuation, and residual checks are timed.}
\label{tab:deploy-selectors}
\end{table}

\begin{table}[t]
\centering\small\setlength{\tabcolsep}{3pt}
\resizebox{\linewidth}{!}{%
\begin{tabular}{lllllllll}
\toprule
PDE & Tolerance & Direct & Zero & Coarse & FNO-D & FNO-P & CNN-D & CNN-P \\
\midrule
Diff. & 1e-04 & 8/8 & 3/8 & 4/8 & 3/8 & 3/8 & 3/8 & 3/8 \\
Diff. & 1e-06 & 8/8 & 2/8 & 2/8 & 2/8 & 2/8 & 2/8 & 2/8 \\
Diff. & 1e-08 & 8/8 & 0/8 & 1/8 & 0/8 & 0/8 & 0/8 & 0/8 \\
Adv. & 1e-04 & 8/8 & 8/8 & 8/8 & 8/8 & 8/8 & 8/8 & 8/8 \\
Adv. & 1e-06 & 8/8 & 6/8 & 6/8 & 6/8 & 6/8 & 6/8 & 6/8 \\
Adv. & 1e-08 & 8/8 & 5/8 & 5/8 & 5/8 & 5/8 & 5/8 & 5/8 \\
\bottomrule
\end{tabular}}
\caption{Full success denominators under the coefficient-log scale-three intervention, grid 63. All other tested conditions accept all endpoints. One numerical repeat is run per task/configuration; the same eight base fields are shared across conditions. Iterative failures are retained and no successful-subset speed is reported.}
\label{tab:deploy-contrast3_acceptance}
\end{table}

\begin{table}[t]
\centering\small\setlength{\tabcolsep}{3pt}
\resizebox{\linewidth}{!}{%
\begin{tabular}{lllllll}
\toprule
PDE & $n$ & State & Resident & Reload & Increment & 95\% interval \\
\midrule
Diff. & 31 & fno\_data & 6.66 & 12.31 & 5.65 & [5.60, 5.69] \\
Diff. & 31 & fno\_physics & 6.59 & 12.25 & 5.66 & [5.61, 5.72] \\
Diff. & 31 & cnn\_data & 5.46 & 10.33 & 4.87 & [4.79, 4.98] \\
Diff. & 31 & cnn\_physics & 5.43 & 10.26 & 4.83 & [4.78, 4.91] \\
Diff. & 63 & fno\_data & 10.83 & 16.44 & 5.62 & [5.55, 5.70] \\
Diff. & 63 & fno\_physics & 10.66 & 16.33 & 5.66 & [5.59, 5.76] \\
Diff. & 63 & cnn\_data & 9.78 & 14.55 & 4.77 & [4.73, 4.82] \\
Diff. & 63 & cnn\_physics & 9.85 & 14.53 & 4.67 & [4.59, 4.73] \\
Diff. & 127 & fno\_data & 26.53 & 32.07 & 5.54 & [5.47, 5.59] \\
Diff. & 127 & fno\_physics & 26.27 & 31.87 & 5.59 & [5.55, 5.64] \\
Diff. & 127 & cnn\_data & 26.47 & 31.23 & 4.76 & [4.74, 4.78] \\
Diff. & 127 & cnn\_physics & 26.49 & 31.26 & 4.76 & [4.73, 4.79] \\
Adv. & 31 & fno\_data & 21.65 & 27.40 & 5.75 & [5.69, 5.84] \\
Adv. & 31 & fno\_physics & 21.65 & 27.37 & 5.72 & [5.63, 5.85] \\
Adv. & 31 & cnn\_data & 20.14 & 24.98 & 4.85 & [4.80, 4.92] \\
Adv. & 31 & cnn\_physics & 19.67 & 24.58 & 4.91 & [4.79, 5.02] \\
Adv. & 63 & fno\_data & 40.49 & 46.25 & 5.76 & [5.63, 5.88] \\
Adv. & 63 & fno\_physics & 39.86 & 45.56 & 5.70 & [5.53, 5.86] \\
Adv. & 63 & cnn\_data & 38.78 & 43.67 & 4.89 & [4.77, 5.02] \\
Adv. & 63 & cnn\_physics & 39.65 & 44.42 & 4.77 & [4.64, 4.90] \\
Adv. & 127 & fno\_data & 106.10 & 111.72 & 5.62 & [5.39, 5.76] \\
Adv. & 127 & fno\_physics & 102.86 & 108.48 & 5.62 & [5.53, 5.70] \\
Adv. & 127 & cnn\_data & 102.87 & 107.98 & 5.11 & [5.04, 5.18] \\
Adv. & 127 & cnn\_physics & 105.16 & 109.90 & 4.75 & [4.53, 5.02] \\
\bottomrule
\end{tabular}}
\caption{Within-process weight reloading versus already resident weights. Milliseconds per query and paired reload increment over four physical tasks per PDE/grid. This includes deserialization/device transfer in an existing Python/CUDA process; it is not a fresh-process cold-start result. Fresh-process results are reported separately in Table~\ref{tab:deploy-fresh_process}.}
\label{tab:deploy-weight_reload}
\end{table}

\FloatBarrier
\subsection{Fresh-process startup contract}
We execute 240 solves in 240 distinct child processes, covering four joint
tasks per PDE, grids 31/127, five paths, and three sequential randomized
repetitions. Neural paths use the source-selected source-0/seed-11 data FNO
or CNN at 6000 updates followed by Poisson correction; direct, zero, and coarse
controls import no Torch. All 96 neural children have no initialized CUDA
context before loading and an initialized context after inference; all 144
classical children and the parent remain Torch-free.

All 240 endpoints satisfy the $10^{-8}$ true-relative-residual contract.
Parent-side independent reference checking gives maximum relative discrete
error $4.11\times10^{-10}$. Eight untimed bridge cases verify identical fields
and matrices and aligned SciPy interpolation against the original Torch
implementation (maximum absolute discrepancy $5.55\times10^{-17}$).
Four checkpoint hashes match the original warm deployment, and the source,
runner, selection, completion and per-record identity gates all pass.

The parent clock runs from child launch until receipt of a flushed residual
acceptance notification. It includes dependency imports, field generation,
assembly, neural weight loading and fresh CUDA initialization, prediction,
correction and notification. Output-vector serialization, process exit, and
parent reference/error checks happen after this endpoint. The OS page cache,
filesystem cache and driver state are not flushed. These measurements ran on
another allocation and use different tasks and endpoints from the warm and
within-process-reload suites; their difference does not identify a CUDA or
weight-loading component cost.

We first take three-repeat medians within each task, then average the four
tasks separately by PDE/grid/path. Task-bootstrap intervals are descriptive;
the small denominator cannot establish population tail latency. Across all
raw repetitions, classical paths range from 0.403 to 0.612 seconds and neural
paths from 2.290 to 2.717 seconds. The table includes all measured paths.

\begin{table}[t]
\centering\small\setlength{\tabcolsep}{3pt}
\resizebox{\linewidth}{!}{%
\begin{tabular}{lllllll}
\toprule
PDE & $n$ & State & Mean(s) & 95\% interval & Task-median range & Accepted \\
\midrule
Diff. & 31 & direct & 0.4133 & [0.4107, 0.4159] & [0.4102, 0.4160] & 12/12 \\
Diff. & 31 & zero & 0.4183 & [0.4170, 0.4196] & [0.4165, 0.4202] & 12/12 \\
Diff. & 31 & coarse & 0.4340 & [0.4163, 0.4518] & [0.4161, 0.4591] & 12/12 \\
Diff. & 31 & fno\_data & 2.3341 & [2.3177, 2.3576] & [2.3142, 2.3698] & 12/12 \\
Diff. & 31 & cnn\_data & 2.3094 & [2.3014, 2.3174] & [2.3005, 2.3197] & 12/12 \\
Diff. & 127 & direct & 0.4964 & [0.4933, 0.4991] & [0.4919, 0.4999] & 12/12 \\
Diff. & 127 & zero & 0.4872 & [0.4846, 0.4898] & [0.4832, 0.4905] & 12/12 \\
Diff. & 127 & coarse & 0.4929 & [0.4891, 0.4957] & [0.4870, 0.4962] & 12/12 \\
Diff. & 127 & fno\_data & 2.4071 & [2.3910, 2.4232] & [2.3872, 2.4276] & 12/12 \\
Diff. & 127 & cnn\_data & 2.3638 & [2.3539, 2.3728] & [2.3503, 2.3757] & 12/12 \\
Adv. & 31 & direct & 0.4146 & [0.4138, 0.4159] & [0.4136, 0.4166] & 12/12 \\
Adv. & 31 & zero & 0.4341 & [0.4317, 0.4358] & [0.4306, 0.4363] & 12/12 \\
Adv. & 31 & coarse & 0.4324 & [0.4292, 0.4358] & [0.4290, 0.4380] & 12/12 \\
Adv. & 31 & fno\_data & 2.3401 & [2.3302, 2.3498] & [2.3263, 2.3534] & 12/12 \\
Adv. & 31 & cnn\_data & 2.3182 & [2.3115, 2.3258] & [2.3086, 2.3296] & 12/12 \\
Adv. & 127 & direct & 0.4983 & [0.4957, 0.5009] & [0.4941, 0.5012] & 12/12 \\
Adv. & 127 & zero & 0.5685 & [0.5583, 0.5850] & [0.5567, 0.5929] & 12/12 \\
Adv. & 127 & coarse & 0.5700 & [0.5585, 0.5811] & [0.5531, 0.5860] & 12/12 \\
Adv. & 127 & fno\_data & 2.4774 & [2.4583, 2.5055] & [2.4546, 2.5199] & 12/12 \\
Adv. & 127 & cnn\_data & 2.4733 & [2.4446, 2.5020] & [2.4423, 2.5146] & 12/12 \\
\bottomrule
\end{tabular}}
\caption{Fresh Python-process latency in seconds, from subprocess launch until the parent receives the true-residual acceptance notification. Each of four physical tasks per PDE/grid is repeated in three distinct processes; repetitions are reduced to task medians before aggregation. FNO/CNN are source-0/seed-11 data models followed by Poisson correction. The task range and descriptive task-bootstrap interval are shown; n=4 limits population inference. Costs include dependency imports, field generation, weight loading and new CUDA contexts for neural paths. Output-vector serialization, child shutdown, and the parent reference solve occur after the endpoint. OS and driver caches are not flushed. All 240 paths accept.}
\label{tab:deploy-fresh_process}
\end{table}

\FloatBarrier
\subsection{Independent confirmation of the batch-cost positive case}
\label{app:batch-confirmation}
The original diffusion, $n=127$, batch-size-32 observation contained only two
batch groups. We therefore froze a conditional, post-hoc confirmation before
running eight new groups (indices 2--9), each containing 32 new tasks.
The original groups 0--1 are excluded from the new estimates. Only the four
loop domains (equation, grid, batch size, group indices) were changed through
an AST wrapper around the frozen deployment implementation; reverse-AST
comparison and source/checkpoint hashes preserve the computational body.
The source dataset and optimization seed remain fixed at 0 and 11; the four
neural paths use their original frozen CNN/FNO data/physics checkpoints.
All seven paths receive one warm-up per group and three measured repetitions
in randomized blocks. Model loading, task generation, and independent direct
reference solutions are outside the online timer. The timer includes matrix
assembly, right-hand-side preparation, initialization (including neural input
transfer and prediction where applicable), preconditioner construction,
numerical correction, synchronization, and true-residual acceptance. Final
reference-error computation is outside the timer. No matrix or factorization
cache is introduced. Thus the result concerns this warm-model online contract,
not startup or full training-and-deployment cost.

The run used node \texttt{Host-D}, one NVIDIA A100-SXM4-80GB, and job
[withheld for anonymous review], with Torch 2.10.0+cu128, NumPy 2.2.6, Python 3.10.19, and one CPU
thread for each recorded BLAS/OpenMP setting. The completed manifest contains
8 units, 168 batch records, and 5,376 query receipts: 256 new physical tasks
measured across seven paths and three repetitions. All receipts passed the
$10^{-8}$ true-relative-residual acceptance rule, with no failures or missing
units. The largest recorded relative residual was $9.99955\times10^{-9}$;
the largest relative error against the independent discrete reference was
$2.84541\times10^{-11}$. These are discrete-system checks.

For each path, we take the median of its three repeated times within each
group, then average the eight group medians. Confidence intervals use 4,000
bootstrap resamples of groups (seed 20260924); paired differences resample
matched groups and are not computed by subtracting marginal intervals.
All predeclared repetitions and groups must pass for a formal timing estimate.
The primary comparison is physics-trained FNO minus zero; other comparisons
are descriptive and receive no multiplicity adjustment.

\begin{table}[t]
\centering\small
\caption{New batch confirmation only. Complete online milliseconds per query;
parentheses show 95\% group-bootstrap intervals. Differences are path minus
zero, so negative is faster. All seven paths have 8/8 accepted groups.}
\label{tab:batch-confirmation}
\begin{tabular}{lrr}
\toprule
Path & Mean cost (95\% CI) & Difference from zero (95\% CI) \\
\midrule
Direct & 40.139 (39.593, 40.513) & 12.670 (12.040, 13.331) \\
Zero + PCG & 27.469 (26.953, 28.020) & --- \\
Coarse + PCG & 31.892 (31.419, 32.397) & 4.423 (4.326, 4.506) \\
CNN (data) + PCG & 29.041 (28.535, 29.575) & 1.572 (1.541, 1.606) \\
CNN (physics) + PCG & 28.984 (28.480, 29.516) & 1.515 (1.474, 1.563) \\
FNO (data) + PCG & 27.746 (27.253, 28.271) & 0.278 (0.241, 0.316) \\
FNO (physics) + PCG & 27.267 (26.765, 27.802) & -0.201 (-0.230, -0.171) \\
\bottomrule
\end{tabular}
\end{table}

The primary paired difference is $-0.201416$\,ms/query, with 95\% CI
$[-0.229989,-0.170747]$; the physics-trained FNO path is faster in all eight
groups. The ratio of mean costs corresponds to a 0.7333\% reduction.
Physics-trained FNO is also cheaper than the implemented coarse and direct
paths by 4.624 and 12.872\,ms/query, respectively (paired 95\% intervals
$[-4.702,-4.544]$ and $[-13.528,-12.246]$). Data-trained FNO is 0.278\,ms/query
slower than zero (interval $[0.241,0.316]$). This contrast supports a positive
case under the specified batching contract, while retaining its small effect
size and dependence on one configuration, one source/optimization seed,
and one node. The condition was selected after the original two-group result;
new tasks provide independent confirmation conditional on that choice,
not a universal speed claim or a correction for the earlier search.

\FloatBarrier
\section{Nested source-information control}
\label{app:source-information}
This extension fixes the source realization while changing its nested training
prefix from 128 to 384 to 1024 examples. A common 64-example source-validation set,
three optimizer seeds, FNO-base architecture, fixed data/residual objectives,
and 6000-update schedule are retained. Twenty-four new training trajectories
are combined with 12 exactly reused 384-example states. The latter pass original
configuration/hash checks and a source-validation recomputation with zero
recorded discrepancy. All 24 new training logs reproduce the frozen schedule
and validation-minimum selection; selected steps range from 5500 to 6000.
Evaluation uses 64 new physical tasks (ID 16 and joint 16 per PDE), grid 31 and
three correctors, producing 3840 accepted endpoints with maximum relative
discrete error $9.493\times10^{-9}$.
The source data arrays remain on the cluster; the local audit checks exact
nesting/hash receipts, complete training logs, and every evaluation identity.
Repeated corrector copies do not create independent prediction evidence.
Intervals below are descriptive and not multiplicity-adjusted. Fixed updates
imply different sample exposures, so the result is conditional on this source
pool and training schedule, not a universal data-scaling law or timing claim.

\begin{table}[t]
\centering\small\setlength{\tabcolsep}{3pt}
\resizebox{\linewidth}{!}{%
\begin{tabular}{lllllll}
\toprule
PDE & Stratum & Loss & $N_s=128$ & 384 & 1024 & 128--1024 reduction [95\% CI] \\
\midrule
Diff. & id & D & 0.0227 & 0.0144 & 0.0125 & 0.0102 [0.0082, 0.0125] \\
Diff. & id & P & 0.0205 & 0.0129 & 0.0112 & 0.0092 [0.0073, 0.0115] \\
Diff. & joint & D & 0.1849 & 0.1416 & 0.1262 & 0.0587 [0.0519, 0.0653] \\
Diff. & joint & P & 0.1803 & 0.1348 & 0.1239 & 0.0564 [0.0454, 0.0683] \\
Adv. & id & D & 0.0203 & 0.0156 & 0.0146 & 0.0057 [0.0046, 0.0072] \\
Adv. & id & P & 0.0192 & 0.0139 & 0.0131 & 0.0061 [0.0049, 0.0076] \\
Adv. & joint & D & 0.1868 & 0.1403 & 0.1316 & 0.0552 [0.0434, 0.0688] \\
Adv. & joint & P & 0.1813 & 0.1374 & 0.1279 & 0.0534 [0.0408, 0.0684] \\
\bottomrule
\end{tabular}}
\caption{Complete prediction configurations, initial relative discrete $L^2$ error. D is data-only; P adds the fixed residual weight $\lambda=0.001$. Both PDEs use one nested source realization, common validation 64, FNO base, and a common 6000-update schedule. Within each of 16 physical test tasks per PDE/stratum, optimizer seeds 11/23/37 are averaged before summaries and descriptive 4000-draw bootstrap intervals. Correction copies do not create extra prediction evidence.}
\label{tab:source-prediction}
\end{table}

\begin{table}[t]
\centering\small\setlength{\tabcolsep}{3pt}
\resizebox{\linewidth}{!}{%
\begin{tabular}{llllll}
\toprule
PDE & Stratum & Loss & 128--384 & 384--1024 & 128--1024 \\
\midrule
Diff. & id & D & 0.0083 [0.0068, 0.0100] & 0.0019 [0.0010, 0.0029] & 0.0102 [0.0082, 0.0125] \\
Diff. & id & P & 0.0075 [0.0062, 0.0090] & 0.0017 [0.0007, 0.0028] & 0.0092 [0.0073, 0.0115] \\
Diff. & joint & D & 0.0433 [0.0345, 0.0518] & 0.0154 [0.0091, 0.0221] & 0.0587 [0.0519, 0.0653] \\
Diff. & joint & P & 0.0456 [0.0340, 0.0591] & 0.0109 [0.0054, 0.0160] & 0.0564 [0.0454, 0.0683] \\
Adv. & id & D & 0.0048 [0.0036, 0.0064] & 0.0010 [0.0005, 0.0016] & 0.0057 [0.0046, 0.0072] \\
Adv. & id & P & 0.0053 [0.0041, 0.0068] & 0.0008 [0.0003, 0.0014] & 0.0061 [0.0049, 0.0076] \\
Adv. & joint & D & 0.0465 [0.0376, 0.0560] & 0.0087 [0.0042, 0.0138] & 0.0552 [0.0434, 0.0688] \\
Adv. & joint & P & 0.0439 [0.0344, 0.0552] & 0.0096 [0.0055, 0.0137] & 0.0534 [0.0408, 0.0684] \\
\bottomrule
\end{tabular}}
\caption{All 24 predeclared paired source-count prediction contrasts. Positive values mean lower prediction error with more source examples, holding validation information and optimizer-update schedule fixed. These intervals are conditional on the single source pool and fitted states; they are not multiplicity-adjusted or population-level data-scaling guarantees.}
\label{tab:source-prediction_pairs}
\end{table}

\begin{table}[t]
\centering\small\setlength{\tabcolsep}{3pt}
\resizebox{\linewidth}{!}{%
\begin{tabular}{llllllll}
\toprule
Stratum & Loss & Corrector & Zero & Coarse & 128 & 384 & 1024 \\
\midrule
id & D & CG & 152.88 & 144.56 & 149.06 & 147.10 & 146.58 \\
id & D & Poisson-PCG & 18.56 & 16.62 & 16.62 & 16.46 & 16.38 \\
id & D & AMG-PCG & 7.88 & 6.12 & 6.92 & 6.73 & 6.73 \\
id & P & CG & 152.88 & 144.56 & 148.46 & 146.40 & 145.44 \\
id & P & Poisson-PCG & 18.56 & 16.62 & 16.40 & 15.98 & 15.83 \\
id & P & AMG-PCG & 7.88 & 6.12 & 6.81 & 6.54 & 6.44 \\
joint & D & CG & 266.19 & 257.69 & 276.25 & 273.27 & 272.00 \\
joint & D & Poisson-PCG & 54.44 & 50.62 & 53.54 & 52.96 & 52.67 \\
joint & D & AMG-PCG & 7.94 & 7.00 & 7.65 & 7.60 & 7.54 \\
joint & P & CG & 266.19 & 257.69 & 274.98 & 272.62 & 272.02 \\
joint & P & Poisson-PCG & 54.44 & 50.62 & 53.06 & 52.56 & 52.31 \\
joint & P & AMG-PCG & 7.94 & 7.00 & 7.67 & 7.54 & 7.46 \\
\bottomrule
\end{tabular}}
\caption{Diffusion complete mean correction counts at grid 31 and true relative residual $10^{-8}$. Each neural cell averages three seeds within 16 physical tasks; each classical control has 16 endpoints and is repeated in D/P rows only for readability. All 3840 endpoints across both PDEs succeed. Controls are independent of source sample count. Counts are not wall-clock utility.}
\label{tab:source-diffusion_updates}
\end{table}

\begin{table}[t]
\centering\small\setlength{\tabcolsep}{3pt}
\resizebox{\linewidth}{!}{%
\begin{tabular}{llllll}
\toprule
Stratum & Loss & Corrector & 128--384 & 384--1024 & 128--1024 \\
\midrule
id & D & CG & 1.96 [0.96, 2.79] & 0.52 [-0.04, 1.04] & 2.48 [1.40, 3.44] \\
id & D & Poisson-PCG & 0.17 [0.06, 0.29] & 0.08 [-0.02, 0.19] & 0.25 [0.10, 0.42] \\
id & D & AMG-PCG & 0.19 [-0.02, 0.40] & 0.00 [-0.08, 0.08] & 0.19 [0.00, 0.38] \\
id & P & CG & 2.06 [1.37, 2.79] & 0.96 [0.25, 1.73] & 3.02 [2.10, 4.06] \\
id & P & Poisson-PCG & 0.42 [0.27, 0.54] & 0.15 [0.06, 0.25] & 0.56 [0.42, 0.69] \\
id & P & AMG-PCG & 0.27 [0.12, 0.44] & 0.10 [0.00, 0.23] & 0.38 [0.23, 0.52] \\
joint & D & CG & 2.98 [1.71, 4.21] & 1.27 [0.46, 2.10] & 4.25 [3.02, 5.50] \\
joint & D & Poisson-PCG & 0.58 [0.46, 0.71] & 0.29 [0.15, 0.44] & 0.88 [0.73, 1.00] \\
joint & D & AMG-PCG & 0.04 [-0.06, 0.17] & 0.06 [0.00, 0.17] & 0.10 [-0.02, 0.25] \\
joint & P & CG & 2.35 [1.54, 3.23] & 0.60 [-0.25, 1.50] & 2.96 [1.92, 4.12] \\
joint & P & Poisson-PCG & 0.50 [0.35, 0.65] & 0.25 [0.10, 0.38] & 0.75 [0.60, 0.90] \\
joint & P & AMG-PCG & 0.13 [0.00, 0.25] & 0.08 [0.02, 0.17] & 0.21 [0.08, 0.33] \\
\bottomrule
\end{tabular}}
\caption{Diffusion all paired changes in correction work with source count. Values are updates saved by the larger count, with descriptive task-bootstrap 95\% intervals. Negative or near-zero changes are retained. The same 16 tasks are paired across counts and seeds are averaged within task; the interval does not resample the source dataset.}
\label{tab:source-diffusion_paired_updates}
\end{table}

\begin{table}[t]
\centering\small\setlength{\tabcolsep}{3pt}
\resizebox{\linewidth}{!}{%
\begin{tabular}{llllllll}
\toprule
Stratum & Loss & Corrector & Zero & Coarse & 128 & 384 & 1024 \\
\midrule
id & D & GMRES & 218.81 & 199.62 & 164.48 & 166.96 & 159.48 \\
id & D & Poisson-GMRES & 40.62 & 40.31 & 39.33 & 39.21 & 38.92 \\
id & D & ILU-GMRES & 3.00 & 3.00 & 3.00 & 3.00 & 3.00 \\
id & P & GMRES & 218.81 & 199.62 & 165.71 & 164.10 & 158.08 \\
id & P & Poisson-GMRES & 40.62 & 40.31 & 38.60 & 38.71 & 38.52 \\
id & P & ILU-GMRES & 3.00 & 3.00 & 3.00 & 3.00 & 3.00 \\
joint & D & GMRES & 346.75 & 289.44 & 305.40 & 302.19 & 301.38 \\
joint & D & Poisson-GMRES & 75.50 & 73.38 & 75.88 & 75.67 & 75.40 \\
joint & D & ILU-GMRES & 3.00 & 3.00 & 3.00 & 3.00 & 3.00 \\
joint & P & GMRES & 346.75 & 289.44 & 305.98 & 299.75 & 300.00 \\
joint & P & Poisson-GMRES & 75.50 & 73.38 & 75.77 & 75.10 & 74.60 \\
joint & P & ILU-GMRES & 3.00 & 3.00 & 3.00 & 3.00 & 3.00 \\
\bottomrule
\end{tabular}}
\caption{Advection complete mean correction counts at grid 31 and true relative residual $10^{-8}$. Each neural cell averages three seeds within 16 physical tasks; each classical control has 16 endpoints and is repeated in D/P rows only for readability. All 3840 endpoints across both PDEs succeed. Controls are independent of source sample count. Counts are not wall-clock utility.}
\label{tab:source-advection_updates}
\end{table}

\begin{table}[t]
\centering\small\setlength{\tabcolsep}{3pt}
\resizebox{\linewidth}{!}{%
\begin{tabular}{llllll}
\toprule
Stratum & Loss & Corrector & 128--384 & 384--1024 & 128--1024 \\
\midrule
id & D & GMRES & -2.48 [-7.35, 2.44] & 7.48 [2.23, 12.90] & 5.00 [-0.04, 9.92] \\
id & D & Poisson-GMRES & 0.13 [-0.31, 0.60] & 0.29 [-0.02, 0.67] & 0.42 [0.13, 0.79] \\
id & D & ILU-GMRES & 0.00 [0.00, 0.00] & 0.00 [0.00, 0.00] & 0.00 [0.00, 0.00] \\
id & P & GMRES & 1.60 [-2.17, 5.23] & 6.02 [0.65, 11.00] & 7.63 [0.85, 14.44] \\
id & P & Poisson-GMRES & -0.10 [-0.56, 0.27] & 0.19 [-0.10, 0.56] & 0.08 [-0.29, 0.35] \\
id & P & ILU-GMRES & 0.00 [0.00, 0.00] & 0.00 [0.00, 0.00] & 0.00 [0.00, 0.00] \\
joint & D & GMRES & 3.21 [-4.77, 11.77] & 0.81 [-6.42, 8.23] & 4.02 [-5.17, 13.02] \\
joint & D & Poisson-GMRES & 0.21 [-0.46, 0.94] & 0.27 [-0.56, 1.04] & 0.48 [-0.46, 1.40] \\
joint & D & ILU-GMRES & 0.00 [0.00, 0.00] & 0.00 [0.00, 0.00] & 0.00 [0.00, 0.00] \\
joint & P & GMRES & 6.23 [-1.40, 15.04] & -0.25 [-5.92, 4.92] & 5.98 [-3.62, 16.77] \\
joint & P & Poisson-GMRES & 0.67 [-0.02, 1.35] & 0.50 [-0.25, 1.31] & 1.17 [0.37, 2.17] \\
joint & P & ILU-GMRES & 0.00 [0.00, 0.00] & 0.00 [0.00, 0.00] & 0.00 [0.00, 0.00] \\
\bottomrule
\end{tabular}}
\caption{Advection all paired changes in correction work with source count. Values are updates saved by the larger count, with descriptive task-bootstrap 95\% intervals. Negative or near-zero changes are retained. The same 16 tasks are paired across counts and seeds are averaged within task; the interval does not resample the source dataset.}
\label{tab:source-advection_paired_updates}
\end{table}

\begin{table}[t]
\centering\small\setlength{\tabcolsep}{3pt}
\resizebox{\linewidth}{!}{%
\begin{tabular}{lllll}
\toprule
PDE & Loss & Corrector & Saved vs zero [95\% CI] & Saved vs coarse [95\% CI] \\
\midrule
Diff. & D & CG & -5.81 [-8.08, -3.83] & -14.31 [-16.67, -12.10] \\
Diff. & D & Poisson-PCG & 1.77 [1.42, 2.13] & -2.04 [-2.35, -1.77] \\
Diff. & D & AMG-PCG & 0.40 [0.19, 0.62] & -0.54 [-0.77, -0.31] \\
Diff. & P & CG & -5.83 [-7.73, -4.08] & -14.33 [-16.96, -11.92] \\
Diff. & P & Poisson-PCG & 2.12 [1.79, 2.48] & -1.69 [-2.00, -1.40] \\
Diff. & P & AMG-PCG & 0.48 [0.27, 0.69] & -0.46 [-0.67, -0.23] \\
Adv. & D & GMRES & 45.38 [31.75, 59.96] & -11.94 [-21.81, -2.31] \\
Adv. & D & Poisson-GMRES & 0.10 [-1.44, 1.50] & -2.02 [-3.33, -0.69] \\
Adv. & D & ILU-GMRES & 0.00 [0.00, 0.00] & 0.00 [0.00, 0.00] \\
Adv. & P & GMRES & 46.75 [32.50, 61.60] & -10.56 [-19.73, -1.33] \\
Adv. & P & Poisson-GMRES & 0.90 [-0.23, 2.15] & -1.23 [-2.25, -0.08] \\
Adv. & P & ILU-GMRES & 0.00 [0.00, 0.00] & 0.00 [0.00, 0.00] \\
\bottomrule
\end{tabular}}
\caption{Joint-shift largest-source states versus both competent initialization controls. Positive values favor the learned state. Increased source information can improve the predictor while leaving a CG disadvantage or a disadvantage against coarse initialization. Fixed residual training is not a newly selected or retuned loss in this experiment.}
\label{tab:source-joint1024_controls}
\end{table}

\FloatBarrier
\section{Same-trajectory selection and fixed target-information budgets}
\label{app:information}
The target distribution is the frozen joint shift at grid $n=31$.
One fixed pool of 80 target labels per equation is allocated as 16/64,
40/40 or 64/16 training/validation pairs. Labels can change roles across
allocations, but are disjoint within each allocation; these are not independent
label pools. Two old pretrained source states and scratch, three optimization
seeds, two learning rates and both PDEs give 108 actual training trajectories.
The 200/800/1600-update checkpoints retain both the final state and the
validation-best state from the same trajectory, yielding 648 evaluated
checkpoint-policy states plus 12 frozen controls. Each equation uses 128 fresh
test tasks, separate from the adaptation label pool and all model selection.

An independent audit reconstructs every checkpoint and learning-rate selection
from complete validation curves and verifies the pre-test selection hash.
All 108 trajectories, 648 checkpoint states and 324 per-seed rate-selected
states are present with finite results. The latter are selections from the
former, not additional training. Within-rate contrasts identify checkpoint
policy comparisons; allowing each policy to select its own rate gives the
broader strategy comparison. Validation selection is sometimes neutral or
slightly unfavorable, particularly with 64/16 advection labels; these entries
are retained. All trajectories were actually run to 1600 updates, so selecting
an earlier checkpoint does not itself establish realized compute savings.
The source weights predate the 6000-update source expansion; these adaptation
results should not be attributed to the new source models.

\begin{table}[t]
\centering\small\setlength{\tabcolsep}{3pt}
\caption{Same-trajectory checkpoint selection at budget 1600. Entries are fixed-final minus validation-best relative $L^2$ errors, in units of $10^{-3}$; positive values favor validation selection. Intervals are descriptive paired-task 95\% bootstraps with 4000 resamples after averaging three optimization seeds within each of 128 test tasks per equation. No multiplicity adjustment is applied. Both rates are fixed in their respective columns; all other budgets are retained in the accompanying CSVs.}
\label{tab:information-same-trajectory}
\resizebox{\linewidth}{!}{%
\begin{tabular}{lllll}\toprule
Equation & State & Train/val & $\eta=3\times10^{-4}$ & $\eta=10^{-3}$\\\midrule
Diff. & Source 0 & 16/64 & $+30.536\;[+25.942,+35.029]$ & $+25.908\;[+21.878,+29.886]$\\
Diff. & Source 0 & 40/40 & $+9.168\;[+5.938,+12.474]$ & $+9.544\;[+6.319,+12.815]$\\
Diff. & Source 0 & 64/16 & $+2.645\;[+0.932,+4.362]$ & $+2.916\;[+1.121,+4.721]$\\
Diff. & Source 1 & 16/64 & $+32.450\;[+27.984,+37.026]$ & $+31.000\;[+27.586,+34.498]$\\
Diff. & Source 1 & 40/40 & $+13.045\;[+7.635,+18.642]$ & $+15.093\;[+10.768,+19.748]$\\
Diff. & Source 1 & 64/16 & $+4.651\;[+2.237,+7.260]$ & $+5.294\;[+2.897,+7.751]$\\
Diff. & Scratch & 16/64 & $+50.715\;[+41.401,+60.105]$ & $+57.667\;[+48.503,+67.028]$\\
Diff. & Scratch & 40/40 & $+14.456\;[+9.169,+20.034]$ & $+21.659\;[+15.936,+27.552]$\\
Diff. & Scratch & 64/16 & $+0.268\;[+0.152,+0.387]$ & $+1.211\;[+0.413,+2.027]$\\
Adv. & Source 0 & 16/64 & $+13.632\;[+11.724,+15.610]$ & $+19.834\;[+17.657,+22.089]$\\
Adv. & Source 0 & 40/40 & $+0.511\;[+0.230,+0.814]$ & $+4.354\;[+3.639,+5.087]$\\
Adv. & Source 0 & 64/16 & $-0.041\;[-0.231,+0.146]$ & $+0.078\;[-0.559,+0.722]$\\
Adv. & Source 1 & 16/64 & $+15.580\;[+13.461,+17.739]$ & $+20.980\;[+18.614,+23.446]$\\
Adv. & Source 1 & 40/40 & $+0.717\;[+0.363,+1.082]$ & $+3.158\;[+2.403,+3.937]$\\
Adv. & Source 1 & 64/16 & $-0.169\;[-0.406,+0.073]$ & $+0.410\;[-0.438,+1.256]$\\
Adv. & Scratch & 16/64 & $+59.738\;[+52.870,+66.970]$ & $+93.113\;[+84.820,+101.272]$\\
Adv. & Scratch & 40/40 & $-0.084\;[-0.166,-0.004]$ & $+6.132\;[+4.959,+7.331]$\\
Adv. & Scratch & 64/16 & $+0.000\;[+0.000,+0.000]$ & $+0.022\;[-0.471,+0.504]$\\
\bottomrule\end{tabular}}
\end{table}

\begin{table}[t]
\centering\small\setlength{\tabcolsep}{3pt}
\caption{Fixed 80-label allocation, budget 1600. Each checkpoint policy selects its own learning rate on validation data. Errors average three seeds within each of 128 independent tasks per equation; selected validation-best steps list the three seeds. These columns compare complete selection policies, not only checkpoint timing.}
\label{tab:information-label-allocation}
\resizebox{\linewidth}{!}{%
\begin{tabular}{llllll}\toprule
Equation & Initial state & Train/val & Fixed-final & Validation-best & Selected steps\\\midrule
Diff. & Source 0 & 16/64 & 0.1955 & 0.1677 & 50,25,25\\
Diff. & Source 1 & 16/64 & 0.1990 & 0.1665 & 50,25,25\\
Diff. & Scratch & 16/64 & 0.3032 & 0.2525 & 350,375,300\\
Diff. & Source 0 & 40/40 & 0.1657 & 0.1581 & 100,250,75\\
Diff. & Source 1 & 40/40 & 0.1693 & 0.1567 & 50,50,25\\
Diff. & Scratch & 40/40 & 0.2308 & 0.2164 & 525,775,400\\
Diff. & Source 0 & 64/16 & 0.1542 & 0.1512 & 1325,125,1400\\
Diff. & Source 1 & 64/16 & 0.1551 & 0.1504 & 225,50,1575\\
Diff. & Scratch & 64/16 & 0.1896 & 0.1893 & 1600,1600,1525\\
Adv. & Source 0 & 16/64 & 0.1473 & 0.1328 & 75,50,150\\
Adv. & Source 1 & 16/64 & 0.1459 & 0.1297 & 50,50,125\\
Adv. & Scratch & 16/64 & 0.2686 & 0.2095 & 200,175,175\\
Adv. & Source 0 & 40/40 & 0.1066 & 0.1061 & 1400,375,1575\\
Adv. & Source 1 & 40/40 & 0.1054 & 0.1048 & 1525,275,1400\\
Adv. & Scratch & 40/40 & 0.1524 & 0.1517 & 1600,1600,1025\\
Adv. & Source 0 & 64/16 & 0.0949 & 0.0949 & 1325,500,1125\\
Adv. & Source 1 & 64/16 & 0.0936 & 0.0940 & 1250,350,1600\\
Adv. & Scratch & 64/16 & 0.1248 & 0.1249 & 1525,1600,1600\\
\bottomrule\end{tabular}}
\end{table}

\FloatBarrier
\section{Nonlinear training sensitivity and independent confirmation}
\label{app:nonlinear-long}
We retrain six FNO states on the original 512/96 source pool for 6000 updates,
using the original implementation with its cosine schedule stretched to the new
budget. This is a post-hoc sensitivity analysis of the training recipe, not a
prefix continuation or proof of training saturation. The original 112 target
tasks give paired budget comparisons. A separate set of 56 new tasks, eight per
stratum with disjoint identifiers, reuses these six states without retraining.
Both cohorts retain three grids, three tolerances, three seeds per neural
objective, and zero, linearized and coarse controls.

All 9072 long-budget and 4536 confirmation endpoints accept, with respectively
27216 and 13608 accepted timing repetitions. Independent local inspection of
all six selected checkpoint files matches the confirmation provenance hashes.
Original-environment residual-bound diagnostics verify 9072 original,
9072 long-budget, and 4536 confirmation records with no observed floating-point
inequality violations. The earlier 189-record cross-platform reference mismatch
is resolved by this same-environment check; it was not an original solve failure.
These are numerical checks, not interval-arithmetic certificates.

Mean prediction error decreases in all seven strata for both training objectives, but
correction improvements are uneven. Joint-shift gains relative to zero become
positive in the longer recipe and in independent confirmation. The original
extreme-stratum negative effect does not receive a sign-definite confirmation:
its new-task interval includes zero. Coarse initialization still matches or
improves the displayed correction counts. Complete warm cost also retains the
cheap banded classical paths' advantage; timing changes across allocations are
not attributed causally to the training budget.

\begin{table}[t]
\centering\small\setlength{\tabcolsep}{3pt}
\resizebox{\linewidth}{!}{%

\begin{tabular}{lllllll}\toprule
Stratum & Objective & Error 1200 & Error 6000 & Steps 1200 & Steps 6000 & Steps saved [95\% CI]\\\midrule
ID & data & 0.0465 & 0.0146 & 2.771 & 2.167 & +0.604 [+0.417,+0.771]\\
ID & physics & 0.0453 & 0.0137 & 2.771 & 2.188 & +0.583 [+0.396,+0.771]\\
Low diffusion & data & 0.5513 & 0.3130 & 4.958 & 4.458 & +0.500 [+0.271,+0.708]\\
Low diffusion & physics & 0.5494 & 0.3204 & 4.958 & 4.542 & +0.417 [+0.208,+0.625]\\
Strong reaction & data & 0.2398 & 0.1907 & 3.792 & 3.729 & +0.062 [-0.125,+0.250]\\
Strong reaction & physics & 0.2360 & 0.1759 & 3.833 & 3.708 & +0.125 [-0.063,+0.312]\\
Rough & data & 0.5079 & 0.4019 & 3.188 & 3.125 & +0.062 [-0.083,+0.208]\\
Rough & physics & 0.3334 & 0.3069 & 3.021 & 3.000 & +0.021 [-0.063,+0.125]\\
Joint & data & 1.4611 & 0.8857 & 6.667 & 5.958 & +0.708 [+0.313,+1.083]\\
Joint & physics & 1.2247 & 0.7489 & 6.354 & 5.750 & +0.604 [+0.271,+0.958]\\
Linear & data & 0.1346 & 0.0744 & 1.000 & 1.000 & +0.000 [+0.000,+0.000]\\
Linear & physics & 0.1343 & 0.0734 & 1.000 & 1.000 & +0.000 [+0.000,+0.000]\\
Extreme & data & 3.1720 & 1.6885 & 12.729 & 12.188 & +0.542 [-0.229,+1.375]\\
Extreme & physics & 2.8958 & 1.4736 & 12.229 & 11.938 & +0.292 [-0.396,+0.854]\\
\bottomrule\end{tabular}}
\caption{Exploratory common-task source-training sensitivity at $n=63$ and residual tolerance $10^{-8}$. Six states are retrained with a stretched 6000-step cosine schedule; 112 old test tasks provide paired contrasts, with 16 tasks per stratum. Positive update differences favor 6000 steps. This is not a same-training-trajectory causal budget contrast.}
\end{table}

\begin{table}[t]
\centering\small\setlength{\tabcolsep}{3pt}
\resizebox{\linewidth}{!}{%

\begin{tabular}{lllllll}\toprule
Stratum & Objective & Initial error & Learned steps & Zero steps & Coarse steps & Saved vs zero [95\% CI]\\\midrule
ID & data & 0.0103 & 2.083 & 4.625 & 2.000 & +2.542 [+2.208,+2.875]\\
ID & physics & 0.0098 & 2.083 & 4.625 & 2.000 & +2.542 [+2.208,+2.875]\\
Low diffusion & data & 0.2623 & 4.333 & 5.625 & 2.000 & +1.292 [+0.958,+1.625]\\
Low diffusion & physics & 0.2730 & 4.292 & 5.625 & 2.000 & +1.333 [+1.042,+1.625]\\
Strong reaction & data & 0.8148 & 3.833 & 4.625 & 2.000 & +0.792 [+0.292,+1.375]\\
Strong reaction & physics & 0.7060 & 3.708 & 4.625 & 2.000 & +0.917 [+0.417,+1.500]\\
Rough & data & 0.3197 & 3.000 & 3.750 & 2.000 & +0.750 [+0.250,+1.375]\\
Rough & physics & 0.2458 & 2.958 & 3.750 & 2.000 & +0.792 [+0.250,+1.375]\\
Joint & data & 0.8362 & 5.875 & 6.875 & 3.000 & +1.000 [+0.708,+1.292]\\
Joint & physics & 0.7371 & 5.667 & 6.875 & 3.000 & +1.208 [+0.792,+1.583]\\
Linear & data & 0.1172 & 1.000 & 1.000 & 1.000 & +0.000 [+0.000,+0.000]\\
Linear & physics & 0.1188 & 1.000 & 1.000 & 1.000 & +0.000 [+0.000,+0.000]\\
Extreme & data & 1.5952 & 11.792 & 11.125 & 6.125 & -0.667 [-1.668,+0.418]\\
Extreme & physics & 1.4075 & 11.417 & 11.125 & 6.125 & -0.292 [-1.125,+0.792]\\
\bottomrule\end{tabular}}
\caption{Independent confirmation: 56 new physical tasks, eight per stratum, reuse the same six 6000-step models without additional training. $n=63$, true residual tolerance $10^{-8}$. Task bootstrap intervals are conditional on these fitted states and small strata; they are not combined with the original 112 tasks.}
\end{table}

\FloatBarrier
\subsection{Retained initial nonlinear results and visual comparison}
\paragraph{A nonlinear boundary test separates three notions of success.}
We independently implement the monotone semilinear problem
$-\nu u''+\gamma u^3=f$ on $[0,1]$ with zero Dirichlet conditions,
$\nu>0$, and $\gamma\geq0$.
Six FNO states (data/residual-augmented objectives, three seeds) are trained for
1200 updates on one common 512/96 source train/validation pool.
Seven prespecified parameter/forcing strata contain 16 independent tasks each,
paired across $n=63,127,255$.
Damped Newton with exact banded Jacobian solves corrects zero, linearized,
coarse-grid and learned initializations at three true-residual tolerances.
All 9072 correction endpoints and 27216 timed repetitions pass acceptance;
all 336 strict discrete references and independent continuous collocation
crosschecks succeed. Optimization seeds and repeated discretizations do not
increase the 112-task count.

At $n=63$ and tolerance $10^{-8}$, the data-trained FNO has mean initial
relative error $0.0465$ in distribution and needs 2.771 Newton updates,
versus 4.250 from zero: 1.479 updates saved, with a descriptive paired-task
95\% interval $[1.146,1.792]$. Coarse initialization needs only 2.000 updates.
Under joint shift the learned-versus-zero saving is $0.021$
$[-0.333,0.396]$; at the extreme low-diffusion/high-reaction boundary it
becomes $-1.604$ $[-2.313,-0.937]$. The residual-augmented objective improves
high-frequency-forcing prediction error from $0.508$ to $0.333$, but does
not beat the coarse initialization's correction work. At $\gamma=0$,
learned, zero and coarse states all require one exact Newton update,
consistent with the linear limit rather than a failed prediction experiment.

The complete-cost comparison is stricter. For the in-distribution
$n=63$ tasks at tolerance $10^{-8}$, warm-resident median times are
$0.612/0.609/0.970/1.824$\,ms for zero/linearized/coarse/data-FNO paths.
At the extreme boundary they are $1.863/2.983/2.725/2.924$\,ms.
Thus even the coarse method's lower correction count can be offset by
initialization work. These measurements include construction, transfers,
initialization, correction and true-residual verification, with one CPU
numerical thread and one A100 for inference. They exclude offline training
and model loading; initialization is warmed, but numerical correction has
no separate warmup. The inexpensive one-dimensional banded solver is a
declared workload, not a universal hardware comparison.

\paragraph{Longer training and independent confirmation.}
Six states use a longer 6000-update training budget with a stretched cosine schedule; this is recipe sensitivity, not a shared-trajectory prefix comparison. On the original 112 tasks, data-FNO Newton savings against zero at $n=63$, tolerance $10^{-8}$, increase from 1.479 to 2.083 in distribution and 0.021 to 0.729 under joint shift. A separate 56-task confirmation gives 2.542 [2.208,2.875] and 1.000 [0.708,1.292], respectively. Extreme-stratum savings are $-1.063$ [$-1.958$,$-0.104$] on old tasks; confirmation gives $-0.667$ [$-1.668$,0.418], leaving the sign uncertain. Both cohorts pass all 9072/4536 endpoints. Coarse initialization still matches or improves correction counts; ID warm data-FNO/zero costs are 1.723/0.609ms.

\paragraph{Discrete acceptance is distinct from continuum accuracy.}
For $F_h(u)=\nu L_hu+\gamma u^{\odot3}-f$, the standard strong-monotonicity
constant is $m_h=\nu\lambda_{\min}(L_h)$, where
$\lambda_{\min}(L_h)=4h^{-2}\sin^2(\pi h/2)$.
Since $(a-b)(a^3-b^3)\geq0$, Cauchy--Schwarz gives
$\|u-u_h^*\|_2\leq\|F_h(u)\|_2/m_h$.
This elementary discrete bound is parameter dependent and does not control
the discretization error. The strict references have maximum relative
residual $8.92\times10^{-12}$, yet their mean relative differences from
independent adaptive continuous collocation solutions on the extreme
stratum are $4.644\%$, $1.173\%$, and $0.223\%$ at
$n=63,127,255$, respectively. The corresponding maxima are
$7.070\%$, $2.179\%$, and $0.347\%$.
Refinement to $2n+1$ supplies a second numerical crosscheck.
Collocation convergence at an RMS residual tolerance of $10^{-7}$ is
not a rigorous pointwise error certificate. 

\begin{figure}[htbp]
\centering\includegraphics[width=\linewidth]{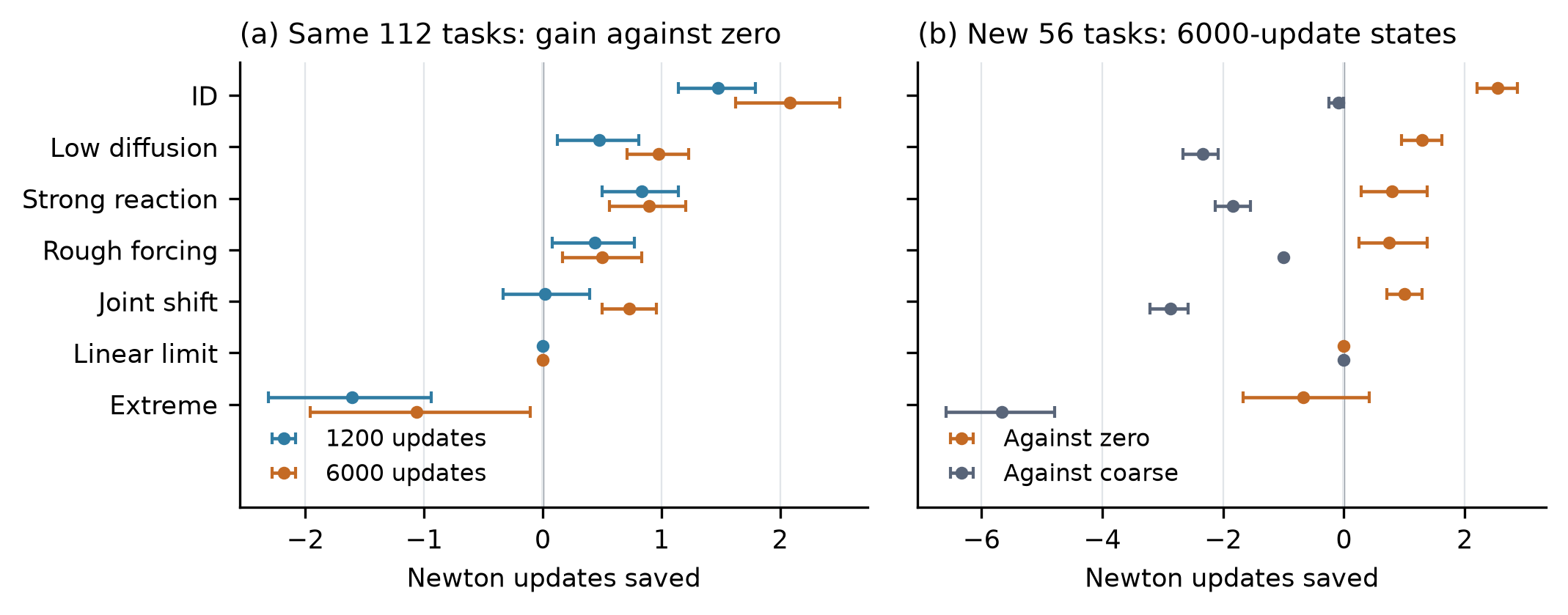}
\caption{Training-recipe sensitivity retained from the source manuscript. Left: the same 112 tasks under two training schedules. Right: 56-task independent confirmation of the longer-budget states, compared with zero and coarse controls. At $n=63$ and residual tolerance $10^{-8}$, the extreme-stratum contrast against zero is not sign-confirmed in the new cohort. Intervals are descriptive paired-task 95\% intervals.}
\end{figure}

\FloatBarrier
\section{Official HINTS mechanism and online timing audit}
\label{app:hints-official}
\paragraph{An official pretrained positive case.}
To distinguish one-time initialization from repeated learned correction, we
also execute the official pretrained HINTS 1D Poisson example
\citep{hints2022}, without retraining or changing its weights, architecture,
Jacobi relaxation ($\omega=2/3$), or 25-update hybrid schedule.
The external measurement wrapper matches the official 400-update residual
trajectory to $2.23\times10^{-16}$ maximum absolute discrepancy.
After freezing an initial 32-case audit, we extend the same protocol to all
750 official test cases as a post-hoc coverage check; the repeated prefix
is not additional independent evidence.
At relative residual $10^{-8}$, HINTS accepts 750/750 cases within 400 updates,
versus 0/750 for Jacobi or a single neural initialization followed by Jacobi.
With a 10,000-update cap all three succeed: mean update counts are
4992.71, 3899.80, and 138.84, respectively.
The last number comprises 133.51 numerical updates and 5.33 neural calls,
which are not equal-cost operations.
Thus initialization itself helps this baseline, and repeated correction helps
substantially more. At $10^{-10}$, Jacobi retains two capped cases (748/750),
while both neural variants accept all cases.
These are mechanism results on 29 interior degrees of freedom, not a
large-scale speed benchmark. Logged trajectories include reference-error
diagnostics; the official setup also computes a reference solution and an
eigendecomposition. Neither those timings nor separately reported direct-solve
checks establish complete deployment cost.
\paragraph{A separate online cost check.}
We time the fixed first 32 cases in three randomized blocks with CPU-resident
inputs and an already loaded model. Every query includes identical matrix
assembly, necessary initialization, and true-residual stopping; reference
solutions, eigendecompositions, and diagnostic errors are absent from the
timed path. Iteration counts exactly match the frozen audit.
Medians across per-task medians are 18.809 ms (Jacobi), 14.339 ms
(one neural initialization), 1.010 ms (HINTS), 0.0356 ms (dense direct), and
0.0299 ms (tridiagonal direct). Thus repeated correction improves online
latency relative to Jacobi under this contract, while direct solution is
faster on this 29-degree-of-freedom system. Model startup (0.410 s), offline
training, and data generation are excluded, so these are warm-model online
costs, not a lifetime-cost or large-scale comparison.

\FloatBarrier
\section{A scaled audit of the official NOWS Darcy implementation}
\label{app:nows-scaled}

\paragraph{Scope and implementation fidelity.}
We use the official NOWS Darcy implementation \citep{nows2026}, retaining its
four-block FNO architecture, width 32, 32 Fourier modes, coefficient preprocessing,
hard homogeneous Dirichlet transform, VINO variational energy, and finite-difference
matrix construction. We call the upstream training function with its data term
disabled. Source reference solutions are still generated for diagnostic logging,
so their cost is not assumed to be zero. This is a scaled audit of official code,
not a reproduction of the published NOWS performance tables.

\paragraph{Frozen training and test protocol.}
The protocol fixes 256 source training examples, 32 source validation examples,
three optimization seeds sharing that source dataset, and 100 epochs per seed.
These settings reduce the upstream default training budget; reaching 100 epochs
does not establish training saturation. The final epoch is fixed in advance,
without target-test checkpoint selection. The test design contains 48 physical
tasks: 16 in distribution, 16 with changed coefficient contrast, and 16 with
changed roughness. Each is evaluated on grids with 64 and 128 points per axis,
at true-relative-residual tolerances $10^{-6}$ and $10^{-8}$, with a cap of
2400 numerical iterations. The larger grid resamples the same base coefficient
field by nearest-neighbor interpolation; it is not an independently sampled
continuous-field discretization-invariance experiment.

\paragraph{Controls and acceptance.}
Each learned initial state is corrected by CG, Poisson-PCG, and AMG-PCG.
Zero and coarse-grid initializations are paired controls for each corrector;
sparse direct solution is a further complete numerical control. The reference
solution is computed outside measured deployment and never enters prediction
or stopping. Acceptance uses the verified residual of the same float64 system.
Failures and capped cases remain in the stated denominators. Residual acceptance
and relative error against the discrete direct solution are reported separately.

\paragraph{Counting and statistical unit.}
The design has 9216 timing records, corresponding to 48 tasks, two grids,
two tolerances, 16 method/seed configurations, and three timed repetitions.
These records are not 9216 independent problems. We first take the timing median
within each task/seed configuration, then average the three neural optimization
seeds within each physical task. Numerical controls are not triplicated over
neural seeds. We report each stratum, grid, tolerance, and hardware node
 separately for timing. Accuracy and iteration counts may pool the disjoint
physical tasks across hosts, retaining the full 16-task stratum denominator.
For timing, the node-assigned task count is reported separately from that global
denominator: a task completed on another host is not a missing task on this host.
Across-task medians and empirical 95th percentiles summarize latency; the latter
are descriptive summaries of the stated node subgroup, not tail-risk
guarantees. Paired task bootstrap intervals condition on the fitted models and
source dataset and are not adjusted for multiple comparisons.

\paragraph{Measured cost contract.}
Every timed query includes matrix assembly, necessary input conversion and
transfer, initial-state construction, preconditioner setup, correction, and
true-residual verification. The model is already loaded and queries use batch
size one. Training, source-data generation, and cold process/model startup are
excluded from this online number and reported separately when measured.
Concurrent activity on other GPUs shares host resources; the actual execution
receipt must identify whether timings were collected in that environment.
We do not equate iteration savings with wall-clock savings or mix successful
subsets into a full-task speed ranking.
We report assembly, initialization, preconditioner setup, and solve/verification
components. The official matrix-construction routine is retained as part of
this contract, so its observed overhead cannot establish the cost of an optimized
Darcy implementation. Subtracting assembly from an existing timer does not
constitute a measured cached-matrix deployment. Component medians need not sum
to the median complete-query time.

\paragraph{Recovery validation.}
The runner uses atomic checkpoints containing model, optimizer, scheduler,
CPU/CUDA/NumPy random states, source-data hashes, and completed epochs.
The scientific protocol is unchanged by a restart. An engineering-only
preflight compares three uninterrupted epochs to one epoch followed by a
serialized restart and two further epochs using two seeds of a tiny test configuration.
This gate passed on the execution hardware for both engineering seeds, with bitwise-equal model parameters and equal optimizer, scheduler, and CPU random states.
Evaluation restarts preserve SHA256-verified completed task/grid units.
Scheduling-only sharding retains both grids of each task, with shard assignment
$({\tt stratum\_index}\times16+{\tt task})\bmod {\tt shard\_count}$.
After removing the injected filter and lock wrapper, the scientific evaluation
function's abstract syntax tree must match the frozen runner exactly. The task
IDs, per-unit random method order, warmup, and measured repetitions are unchanged.
Exclusive unit locks and separate shard receipts prevent duplicated units.
The global acceptance manifest remains 96 task/grid files and 9216 timing rows.
Engineering interruptions and repeated resource allocations are reported
separately from per-query latency, without counting a resumed task twice.

\paragraph{Completion and numerical outcomes.}
All 48 physical tasks, 96 task/grid files, and 9,216 timed records passed the
exact manifest and file-hash audit; every recorded solve satisfied its own
true-relative-residual tolerance. The 3,072 seed-specific endpoints collapse
to 1,920 task-level method endpoints, not additional physical tasks. There were
no missing units, duplicate configurations, or numerical changes across
repetitions. All three models completed the fixed 100 epochs.
At $10^{-8}$, the complete 18-row comparison in
Table~\ref{tab:nows-iterations} shows correction-dependent gains. For CG on
128-point grids, learned initialization saves 20.15 updates versus zero in
distribution and 15.46 under roughness shift, but adds 38.71 under contrast
shift. Coarse initialization requires fewer updates in all nine
stratum/corrector combinations at this grid and tolerance; all corresponding
paired intervals lie below zero for the ``coarse minus learned'' difference.
Table~\ref{tab:nows-accuracy} separately reports prediction and final discrete
errors. The maximum learned-path final relative error at $10^{-8}$ is
$7.44\times10^{-10}$; this observation is not a continuous-PDE error certificate.

\paragraph{Timing and stronger controls.}
Timing is partitioned over the three observed hosts. For 128-point ID tasks,
Host-A contributes six and Host-B ten tasks; Host-B contributes all 16 contrast and all
16 roughness tasks. The 64-point ID stratum additionally has one task on Host-C.
A singleton subgroup cannot support a population uncertainty claim: its
bootstrap interval is degenerate. Tables and complete CSV files retain this
subgroup without inflating its sample size.
At the fixed display of grid128 and $10^{-8}$, none of the 12
host/stratum/corrector subgroups has a paired time-saving interval wholly above
zero versus either zero initialization or direct solution. This is neither an
equivalence test nor a statement that every individual task is slower.
The learned path does, however, save approximately 0.300--0.330\,s relative to
this implementation's coarse initialization in all 12 subgroups, with positive
paired intervals. Coarse initialization constructs and solves an additional
half-resolution system inside its timer. Its lower iteration count therefore
does not imply a lower complete-query cost.
At the looser $10^{-6}$ tolerance, grid128 Poisson-PCG has small positive
intervals against direct solution in three multi-task host subgroups, but no
positive interval against its matched zero-initialized corrector; those
contrasts do not establish an advantage attributable to learned initialization.

\paragraph{Assembly and offline accounting.}
At grid128, matrix-assembly component medians are approximately 1.27--1.30\,s
and dominate the retained complete-query contract. For the ten ID tasks on
Host-B at $10^{-8}$, direct, zero-CG, and learned-CG total medians are respectively
1.3322, 1.3591, and 1.3661\,s. The learned-CG initialization and solve/verification
component medians are 0.00427 and 0.06448\,s. The component breakdown in
Table~\ref{tab:nows-cost} prevents treating this as a benchmark of the best
available matrix-assembly implementation.
The three 100-epoch training-function timers are 12.859, 12.721, and
12.723\,s. They include validation inside the training function and device
synchronization, but exclude model/optimizer construction, checkpoint I/O,
final post-loop validation, preflight, and scheduling. They are not complete
offline costs. Shared source/reference generation and compressed cache writing
take a separately measured 94.823\,s. Engineering interruptions and allocation
costs are reported in the campaign receipt, not absorbed into these timers.
The final scheduling-only run retained 13 completed units unchanged and added
83, without changing the 96-unit scientific manifest. Concurrent GPU work
shared host resources; no whole-node isolation is claimed.

\paragraph{Interpretation and full reporting.}
This native-objective audit closes an objective-mismatch gap in the original
residual-trained study. Under the reduced training budget tested here, iteration and cost rankings vary with the distribution shift, corrector, and initialization overhead.
HINTS remains a separate positive example of repeated learned corrections;
its equation, resolution, hardware, and mechanism are not pooled with this
warm-start audit. All configurations, both tolerances, per-seed results,
acceptance denominators, and per-host component timings are retained in the
accompanying CSV and workbook. The fixed appendix display includes every
stratum and both grids for numerical comparisons at $10^{-8}$, and every
observed host/stratum subgroup for the five stated timing paths at grid128.

\begin{table}[t]
\centering\small
\caption{Scaled official NOWS audit at $10^{-8}$. Neural counts require every optimization seed to pass. Positive $\Delta$ means that the numerical control takes more updates. Each paired cell gives the mean, task-bootstrap interval, and number of jointly accepted tasks; any incomplete acceptance remains a conditional comparison. All strata and both grids are shown.}
\label{tab:nows-iterations}
\resizebox{\textwidth}{!}{\begin{tabular}{lllrlll}
\toprule
Shift & Grid & Corrector & Neural accepted & Neural updates & $\Delta$ vs zero [95\% CI]; $N$ & $\Delta$ vs coarse [95\% CI]; $N$\\
\midrule
ID & 64 & CG & 16/16 & 296.38 & 12.31 [10.75, 13.79]; 16 & -2.88 [-4.85, -0.98]; 16 \\
ID & 64 & Poisson-PCG & 16/16 & 14.02 & 1.92 [1.81, 2.00]; 16 & 0.10 [-0.02, 0.27]; 16 \\
ID & 64 & AMG-PCG & 16/16 & 7.10 & 1.02 [0.94, 1.12]; 16 & -0.04 [-0.10, 0.00]; 16 \\
ID & 128 & CG & 16/16 & 613.23 & 20.15 [17.12, 23.23]; 16 & -38.60 [-43.23, -34.17]; 16 \\
ID & 128 & Poisson-PCG & 16/16 & 14.94 & 1.50 [1.25, 1.75]; 16 & -0.38 [-0.62, -0.12]; 16 \\
ID & 128 & AMG-PCG & 16/16 & 8.62 & 1.12 [0.94, 1.31]; 16 & -0.62 [-0.81, -0.44]; 16 \\
Contrast & 64 & CG & 16/16 & 908.52 & -18.83 [-22.85, -14.96]; 16 & -39.40 [-47.73, -30.79]; 16 \\
Contrast & 64 & Poisson-PCG & 16/16 & 38.33 & 0.29 [0.10, 0.48]; 16 & -1.02 [-1.31, -0.73]; 16 \\
Contrast & 64 & AMG-PCG & 16/16 & 10.10 & -0.04 [-0.10, 0.00]; 16 & -0.54 [-0.77, -0.31]; 16 \\
Contrast & 128 & CG & 16/16 & 1886.02 & -38.71 [-46.54, -31.71]; 16 & -161.83 [-180.23, -145.14]; 16 \\
Contrast & 128 & Poisson-PCG & 16/16 & 41.62 & -0.12 [-0.38, 0.15]; 16 & -1.88 [-2.19, -1.54]; 16 \\
Contrast & 128 & AMG-PCG & 16/16 & 11.77 & -0.15 [-0.35, 0.04]; 16 & -1.08 [-1.19, -1.00]; 16 \\
Roughness & 64 & CG & 16/16 & 296.04 & 8.33 [6.46, 10.15]; 16 & -2.23 [-5.25, 0.56]; 16 \\
Roughness & 64 & Poisson-PCG & 16/16 & 15.00 & 2.00 [2.00, 2.00]; 16 & 0.06 [0.00, 0.19]; 16 \\
Roughness & 64 & AMG-PCG & 16/16 & 7.90 & 0.92 [0.73, 1.08]; 16 & 0.10 [0.00, 0.27]; 16 \\
Roughness & 128 & CG & 16/16 & 610.04 & 15.46 [9.92, 20.23]; 16 & -27.35 [-33.10, -21.60]; 16 \\
Roughness & 128 & Poisson-PCG & 16/16 & 15.98 & 1.02 [1.00, 1.06]; 16 & -0.67 [-0.88, -0.44]; 16 \\
Roughness & 128 & AMG-PCG & 16/16 & 8.98 & 0.90 [0.73, 1.00]; 16 & -0.23 [-0.44, -0.04]; 16 \\
\bottomrule\end{tabular}}
\end{table}

\begin{table}[t]\centering\small
\caption{Initial prediction error and final discrete verification for the scaled NOWS audit at $10^{-8}$. Initial error is shown from the fixed CG configuration. Final error/residual are maxima across the three correctors, tasks and optimization seeds. Residual acceptance is not an error certificate for the continuous PDE.}
\label{tab:nows-accuracy}
\begin{tabular}{llrrrr}\toprule
Shift & Grid & Mean initial error & Max final error & Max true residual & Min accepted\\\midrule
ID & 64 & 2.14e-02 & 6.47e-10 & 9.99e-09 & 16/16 \\
ID & 128 & 2.26e-02 & 4.18e-10 & 1.00e-08 & 16/16 \\
Contrast & 64 & 5.71e-01 & 7.44e-10 & 9.99e-09 & 16/16 \\
Contrast & 128 & 5.76e-01 & 4.39e-10 & 9.99e-09 & 16/16 \\
Roughness & 64 & 2.56e-02 & 4.88e-10 & 9.99e-09 & 16/16 \\
Roughness & 128 & 3.47e-02 & 4.82e-10 & 9.99e-09 & 16/16 \\
\bottomrule\end{tabular}\end{table}

\begin{table}[t]\centering\scriptsize
\caption{Within-host online cost in milliseconds for the fixed grid128, $10^{-8}$ display. $N$ is the verified task subset assigned to that host, not the global 16-task stratum denominator. Successful-subset numbers are conditional when acceptance is incomplete. The fixed display methods are direct, zero/VINO with CG, and zero/VINO with AMG-PCG. Component medians need not sum to total median.}
\label{tab:nows-cost}
\resizebox{\textwidth}{!}{\begin{tabular}{lllr rrrrrr}\toprule
Host & Shift & Method & Accepted/$N$ & Total med. & Total P95 & Assembly & Init. & Setup & Solve/verify\\\midrule
Host-B & ID & Direct & 10/10 & 1332.238 & 1367.106 & 1292.957 & 0.087 & 0.005 & 39.159 \\
Host-B & ID & Zero+CG & 10/10 & 1359.059 & 1372.133 & 1291.839 & 0.086 & 0.002 & 66.365 \\
Host-B & ID & VINO+CG & 10/10 & 1366.133 & 1381.798 & 1300.307 & 4.266 & 0.013 & 64.482 \\
Host-B & ID & Zero+AMG-PCG & 10/10 & 1336.855 & 1363.314 & 1293.188 & 0.089 & 24.267 & 19.904 \\
Host-B & ID & VINO+AMG-PCG & 10/10 & 1345.768 & 1360.770 & 1300.535 & 4.268 & 23.989 & 17.076 \\
Host-B & Contrast & Direct & 16/16 & 1329.791 & 1340.329 & 1290.252 & 0.086 & 0.002 & 39.278 \\
Host-B & Contrast & Zero+CG & 16/16 & 1477.182 & 1525.945 & 1287.808 & 0.086 & 0.003 & 183.396 \\
Host-B & Contrast & VINO+CG & 16/16 & 1488.822 & 1531.968 & 1296.128 & 4.254 & 0.007 & 187.881 \\
Host-B & Contrast & Zero+AMG-PCG & 16/16 & 1335.259 & 1387.074 & 1288.813 & 0.086 & 23.346 & 21.903 \\
Host-B & Contrast & VINO+AMG-PCG & 16/16 & 1342.350 & 1358.913 & 1290.746 & 4.255 & 23.168 & 21.951 \\
Host-B & Roughness & Direct & 16/16 & 1321.574 & 1354.608 & 1281.185 & 0.085 & 0.002 & 38.963 \\
Host-B & Roughness & Zero+CG & 16/16 & 1346.721 & 1380.966 & 1280.268 & 0.084 & 0.003 & 64.726 \\
Host-B & Roughness & VINO+CG & 16/16 & 1344.341 & 1368.938 & 1277.486 & 4.222 & 0.003 & 63.019 \\
Host-B & Roughness & Zero+AMG-PCG & 16/16 & 1319.319 & 1348.227 & 1274.797 & 0.086 & 23.915 & 19.875 \\
Host-B & Roughness & VINO+AMG-PCG & 16/16 & 1330.617 & 1349.704 & 1285.243 & 4.214 & 24.033 & 18.007 \\
Host-A & ID & Direct & 6/6 & 1327.124 & 1333.577 & 1292.487 & 0.095 & 0.007 & 34.410 \\
Host-A & ID & Zero+CG & 6/6 & 1358.266 & 1367.124 & 1292.053 & 0.093 & 0.017 & 66.077 \\
Host-A & ID & VINO+CG & 6/6 & 1364.059 & 1370.590 & 1296.182 & 4.124 & 0.003 & 64.416 \\
Host-A & ID & Zero+AMG-PCG & 6/6 & 1333.307 & 1341.165 & 1290.352 & 0.095 & 25.031 & 19.951 \\
Host-A & ID & VINO+AMG-PCG & 6/6 & 1331.903 & 1346.914 & 1285.992 & 4.129 & 25.048 & 17.783 \\
\bottomrule\end{tabular}}
\end{table}

\FloatBarrier
\section{Official-state boundary interface audit}
\label{app:boundary-interface}
\paragraph{Nonzero Dirichlet conditions through a verified numerical interface.}
We freeze the first 32 official HINTS Poisson test tasks and pair the boundary
values $(0,0)$, $(1,0)$, $(0,1)$, and $(1,-1)$ on the same coefficient, forcing,
and 30-element grid. These are 128 paired configurations of 32 previously
evaluated tasks, not 128 independent tasks or an additional independent sample.
The official discretization is the conservative operator $-(ku')'$, with
arithmetic face coefficients. An independently assembled full matrix, including
the boundary rows, has an interior block exactly equal to the official one.
For $g(x)=a(1-x)+bx$, we solve the homogeneous-boundary correction equation for
$v=u-g$ with right-hand side $f_{\mathrm{int}}-B_{\mathrm{int}}g$.
The latter correction is essential for variable coefficients: affine $g$ does
not generally have a zero conservative operator. Full-system and lifted direct
references agree to a maximum relative field difference of $2.731\times10^{-14}$
over all 128 configurations.

We retain the official pretrained weights, relaxation $2/3$, and one neural
correction every 25 total updates. Both iterative paths start from $v=0$.
Acceptance requires the true interior residual divided by the original forcing
norm to be at most $10^{-8}$ and exact endpoint values, with a cap of 10,000
updates. The original-forcing denominator is shared across the four boundary
conditions of a task; residuals with eliminated and lifted right-hand-side
denominators are additionally archived. All 384 method endpoints pass, with
zero boundary error. The zero-boundary HINTS/Jacobi update counts and acceptance
exactly reproduce the prior 64 frozen endpoints.
For the four boundary conditions in the order above, HINTS requires
137.438, 140.531, 143.406, and 146.094 mean total updates, versus
4966.281, 4899.750, 4983.969, and 5032.875 for Jacobi.
HINTS totals decompose into 132.250, 135.125, 137.938, and 140.500 numerical
updates plus 5.188, 5.406, 5.469, and 5.594 neural calls.
Every direct solve also passes. The maximum HINTS relative discrete solution
errors are $1.043\times10^{-8}$, $7.019\times10^{-10}$,
$5.314\times10^{-10}$, and $9.528\times10^{-10}$, respectively;
nonzero boundary values change the solution norm, so these relative errors
alone should not be interpreted as improved accuracy.
This establishes a correctly translated numerical interface in the tested
range, not learned generalization to new boundary conditions.
No formal cost claim is made, and mixed/Neumann conditions, other geometries,
and untested boundary amplitudes remain outside this experiment.

\FloatBarrier
\section{Extended literature coverage and scope}
\label{app:literature-scope}\label{app:extended-literature}

\paragraph{Search boundary.}
This is a structured scoping audit, not a claim to an exhaustive systematic
review. We checked the primary-source metadata of all 63 references in the
previous manuscript and added 41 records selected for direct methodological
or evaluation relevance, with a search cutoff of 24 September 2026.
Searches combined PINN, neural operator, warm start, preconditioner,
Krylov, meta-learning, transfer, discretization, geometry, benchmark, and cost
terms. Backward searches followed references in the existing review and key
hybrid-solver papers; targeted forward searches used method names and newer
papers that discuss those methods. Publisher records, official proceedings,
author manuscripts, and official code repositories take precedence over
aggregators. The accompanying catalogue separates publication status,
metadata verification, and relevance assessment. This process does not
estimate recall over the field, and metadata checks do not imply an independent
full-text replication audit of every listed work.

\paragraph{PINN optimization and reliability.}
Reviews of physics-informed machine learning describe broad method families
\citep{karniadakis2021,toscano2025,torres2025,deryckreview2024}.
Optimization pathology, operator preconditioning, precision, and training
practice are distinct contributors to observed PINN performance
\citep{ntk2020,gradient2021,expert2023,precondition2023,fp642025}.
Exact boundary imposition and domain decomposition change the approximation
space and optimization problem \citep{hardbc2021,fbpinn2021}.
Generalization analyses require PDE-dependent stability and sampling assumptions
\citep{mishra2020,mishrainverse2020}; Bayesian formulations address noisy-data
uncertainty through a different inference objective \citep{bpinn2020}.
These issues should not be collapsed into conditioning of the numerical
corrector after training. Moreover, observational accuracy and the evidence
needed for a scientific claim are different levels of assessment
\citep{wangevidence2026}. The present protocol addresses a specified numerical
deployment question, rather than establishing universal physical validity.

\paragraph{Operator representations and discretization.}
Operator-learning reviews place learned function-space maps within scientific
simulation and design \citep{aziz2024,operator2023}.
MeshGraphNets, Geo-FNO, GINO, Transolver, and CNO use different mechanisms for
mesh, geometry, or resolution handling
\citep{meshgraph2020,geofno2022,gino2023,transolver2024,cno2023}.
Neural-HSS is a recent structured Green-function approach, inspected here in
its preprint form \citep{neuralhss2026}.
Continuum approximation assumptions and finite-grid implementation errors
remain separate questions \citep{continuousdisc2024,mismatch2025}.
PINTO and geometry-aware physics-informed operators further condition on
initial/boundary data or geometry \citep{pinto2024,pigano2024}; their capabilities
cannot be attributed to an arbitrary fixed-grid CNN merely because both retain
weights. Similarly, analysis of what physics-informed DeepONets learn concerns
their training and representations, not a guarantee that a small surrogate
error implies efficient numerical correction \citep{williams2024}.

\paragraph{Hybrid numerical learning.}
Convergence-aware learned PDE iterations, learned discretizations, and
solver-feedback training place learning at different points inside numerical
computation \citep{neuralpde2019,mlcfd2021,solverloop2020}.
Learned algebraic multigrid concerns transfer operators; neural incomplete
factorization and related graph methods concern preconditioner construction
\citep{learnedamg2020,neuralif2024,trifonov2026,gnp2025}.
Learned incomplete factorization for GMRES extends the problem to nonsymmetric
systems \citep{neuralilu2025}; convolutional and sparse-approximate-inverse
preconditioners introduce further structural and hardware choices
\citep{sappl2026,spai2025}.
The recent DualNMG result is cited as an ICLR workshop paper, not as a main-track
publication \citep{dualnmg2026}.
These methods complement the warm-start literature rather than form a single
interchangeable baseline. NOWS is a published 2026 journal article whose
preprint appeared in 2025 \citep{nows2026}.
Its original training objective, architecture, solver configurations, and data
distribution must be distinguished from a reduced-budget adaptation.
Two further direct neighbors were recovered by cross-checking earlier project
plans: spectrally safe Newton warm starts and error-conditioned neural solvers
\citep{spectralsafe2026,ens2026}, both cited as 2026 preprints.
Their state-dependent Jacobian and learned recurrent-correction mechanisms
preclude claiming that prediction--solver mismatch or residual feedback is new.
Our fixed-SPD interventions instead isolate the initial error's spectral weights
while leaving the operator unchanged. A unique-solution nonlinear extension
does not reproduce a loss of Jacobian definiteness or nonlinear branch selection.
Backward citation checks additionally identify neural-initialized finite-element
Newton, learned Newton updates, fixed-point neural preconditioning, and
Cholesky-factor learning inside Newton--Kantorovich iterations
\citep{nin2026,nino2024,npnewton2025,machineprecision2025}.
These retain different objects and have different training and convergence
assumptions. DeepONet-based direct and trunk-basis preconditioning likewise
distinguish learned inverse actions from learned correction subspaces
\citep{deeponetpc2025}.

\paragraph{Adaptation, prompts, and foundation models.}
One-shot transfer, meta-learning, and learned optimization reuse different
objects and permit different target-time computations
\citep{oneshot2021,metapde2022,metaloss2021,dats2024,learnedsolver2024}.
Poseidon, DPOT, PDEformer, multiple-physics pretraining, PROSE, and Walrus target
different forms of breadth, conditioning, and transfer
\citep{poseidon2024,dpot2024,pdeformer2024,mpp2023,prose2025,walrus2025}.
Scaling studies provide evidence about transfer under their own pretraining
and fine-tuning protocols \citep{scaling2023}.
In-context operators and their PDE-generalization studies use problem examples
as information at inference \citep{icon2023,icongeneral2024}.
Equation recasting and operator splitting add structured composition or
target-time choices \citep{recast2026,splitting2026}.
Their classification requires recording prompts, observed trajectories,
searches, and auxiliary solvers in addition to parameter updates.
A spectral audit of in-context networks studies local response properties
\citep{spectral2026}; this is distinct from the matrix-weighted initial-error
measure underlying linear Krylov correction.

\paragraph{Generative and program-level reuse.}
Physics-informed diffusion, DiffusionPDE, and CoCoGen retain generative priors
and use different conditioning or consistency mechanisms
\citep{pidm2024,diffusionpde2024,cocogen2023}.
Distillation can change the inference budget without making the original
generative sampling cost disappear \citep{piddm2025}.
Language-guided PINN construction, evolutionary optimization, and agentic
algorithm discovery may retain executable procedures rather than only a
solution predictor \citep{langpinn2025,evolution2025,evopinn2026}.
PDEAgent-Bench evaluates solver generation across metrics and libraries
\citep{pdeagent2026}. Including these topics prevents an artificially narrow
definition of reuse, but the controlled experiments do not test every one of
these application settings or establish a ranking over them.

\paragraph{Cost and correctness boundaries.}
PDEBench, PINNacle, and The Well expand benchmark diversity
\citep{pdebench2022,pinnacle2023,well2024}; established baseline critiques require
equal accuracy and credible numerical competitors
\citep{weakbaseline2024,grossmann2024}.
Classical reusable factorizations, reduced bases, and Krylov subspaces belong
in many-query cost reasoning \citep{hesthaven2016,recycling2006}.
GPU-oriented neural preconditioners and TensorGalerkin make hardware placement
material to a wall-clock comparison \citep{spai2025,tensorgalerkin2026}.
For nonlinear problems, an initial guess can affect branch selection, not only
iteration count \citep{ancestor2026}. Accordingly, a residual tolerance certifies
the tested discrete equation under stated conditions; it does not by itself
certify mesh convergence, the intended nonlinear branch, or external physical
validity. 

\paragraph{Additional direct methodological precedent in this revision.} \citet{sambharya2024} train neural warm starts through fixed-point optimization, analyze generalization, and evaluate iteration and time reductions. This direct precedent is added to the original 104-entry catalogue; the present bibliography therefore contains 105 entries. The original 18 version-specific protocol records are not expanded by this metadata addition.

\FloatBarrier
\section{Frozen-library model selection on new tasks}
\label{app:selection-study}
\paragraph{Question and frozen design.}
This revision-stage experiment tests a decision suggested by the earlier audit: with the same candidate library and validation information, does selection by actual correction work improve held-out work relative to selection by prediction error? It adds no training and is not retrospectively described as part of an originally preregistered study. The protocol, candidate hashes, tie rules, failure handling, task identifiers, and quality thresholds were frozen before the new results. After all validation units completed, choices were written to a locked file before test data were generated.

Each of two diffusion source libraries contains 12 previously trained 6000-update candidates: CNN/FNO, data/residual-augmented objectives, and three optimization seeds. Earlier source-validation choices of residual weight are unchanged. There are 32 new validation and 64 new test tasks in each of ID and joint shift, with their continuous coefficient/forcing realizations shared across grids $n=31,63$. Thus there are 64 validation and 128 test physical tasks in total, not one new cohort per model, grid, source, or corrector. Validation IDs begin at 22100000/22200000 for ID/joint; test IDs begin at 22110000/22210000. All use the core generator. Zero and coarse are controls, not candidates in the learned-state library.

\paragraph{Selection, acceptance, and failures.}
Prediction selection minimizes mean validation relative Euclidean solution error. Work selection lexicographically minimizes validation failure count, mean penalized updates, and the fixed candidate label. A failed solve receives a score of 1201, one greater than the 1200-update cap; it is not recorded as a successful 1201-update solve. CG and Poisson-PCG require algorithmic success, finite values, and a recomputed physical relative residual at most $10^{-8}(1+10^{-6})$. Validation predictions and labels are shared; no test observation selects a model. Secondary residual selection minimizes the initial physical relative residual. All 19,968 validation/test endpoints pass the declared gate. The analysis independently reconstructs each locked choice from the raw validation records.

\paragraph{Costs and statistical unit.}
Both main policies receive the same 32 reference labels and predictions per stratum/grid. Work scoring additionally requires per-task preconditioner preparation and all 384 candidate corrections per library/corrector. Its measured incremental CPU cost is 0.19--4.97 seconds per configuration, excluding small sorting and file-I/O costs. This is an incremental validation measurement, not full offline lifecycle cost. Each timed test path freshly assembles the matrix, constructs or predicts the initial state, builds the preconditioner, corrects, and checks true residual; models remain resident. Task generation, model loading, reference solution, and final error diagnostics are outside that boundary. Path order is shuffled within task. There is one timing observation per task/path, so times are descriptive with no timing intervals or claim of time-optimal selection.

Reported work contrasts are paired over the same 64 test tasks, with 4000 percentile-bootstrap resamples (seed 20260925), separately for source, distribution, grid, and corrector. They condition on the frozen library and this validation draw; they do not include variation from regenerating source data, retraining models, or repeating validation selection. The 16 main intervals are unadjusted descriptive intervals. A zero-width interval when both policies select the same state is an identity on these executions, not proof of equivalence between distinct policies in general.

\paragraph{Accurate trained-state ranking reversals.}
Within each task and source library, compare all 66 candidate pairs. A strict ranking reversal occurs when a smaller initial summary accompanies more penalized updates; tied work is a non-reversal. Summary differences at most $10^{-12}$ are excluded from that summary's comparable-pair denominator. For joint dominance, all three summaries must be strictly smaller in the same direction. The predeclared diagnostic strata require both prediction errors to be at most 5\%, 10\%, or 25\%, or impose no quality restriction. These true-error strata use test references for diagnosis, never for model selection. We average reversal fractions within task before task bootstrapping; a task without an eligible pair is excluded only from that stratum's conditional fraction, with its count explicitly retained. Pair counts are not independent sample sizes.

Table~\ref{tab:selection-accurate-states} gives the ID/$n=63$, 5\% stratum. For CG, joint-dominance reversals number 72/362 and 101/408 eligible pairs, while the mean taskwise fractions are 23.50\% and 23.87\%; these are intentionally different summaries. Poisson-PCG has no observed joint-dominance reversal on the same subsets, with 245/276 work ties in the two libraries. Zero observed events and a degenerate bootstrap interval do not imply zero population risk. This is direct evidence of ranking mismatch in trained states, but it does not isolate a causal spectral mechanism. The full quality strata, secondary selector, controls, chosen labels, costs, and raw records are included in the executable supplement.

\begin{table}[t]
\centering\small
\caption{Real learned-state reversals on the 64 new ID tasks at $n=63$,
restricted to pairs for which both relative prediction errors are at most
5\%. The estimate is the mean of per-task reversal fractions with a task-bootstrap
95\% interval, not the pooled-pair fraction. All rows have 64 eligible tasks.
Work ties are non-reversals; zero observed reversals do not bound population risk.}
\label{tab:selection-accurate-states}
\begin{tabular}{cclrr}
\toprule
Source & Corrector & Smaller summary & Reversals/eligible pairs & Mean task fraction [95\% CI] \\
\midrule
0 & CG & Solution error & 423/935 & 45.47\% [41.51, 49.38] \\
0 & CG & Initial residual & 260/935 & 27.81\% [24.01, 32.03] \\
0 & CG & Energy error & 243/935 & 26.04\% [22.08, 30.31] \\
0 & CG & All three (strict) & 72/362 & 23.50\% [17.61, 30.39] \\
0 & PCG & Solution error & 145/935 & 15.42\% [11.15, 20.10] \\
0 & PCG & Initial residual & 7/935 & 0.73\% [0.21, 1.35] \\
0 & PCG & Energy error & 2/935 & 0.21\% [0.00, 0.52] \\
0 & PCG & All three (strict) & 0/362 & 0.00\% [0.00, 0.00] \\
1 & CG & Solution error & 419/906 & 45.57\% [40.68, 50.47] \\
1 & CG & Initial residual & 259/906 & 28.75\% [25.16, 32.40] \\
1 & CG & Energy error & 247/906 & 27.19\% [23.80, 30.68] \\
1 & CG & All three (strict) & 101/408 & 23.87\% [18.58, 29.33] \\
1 & PCG & Solution error & 116/906 & 12.92\% [9.11, 17.19] \\
1 & PCG & Initial residual & 8/906 & 0.94\% [0.36, 1.61] \\
1 & PCG & Energy error & 2/906 & 0.21\% [0.00, 0.52] \\
1 & PCG & All three (strict) & 0/408 & 0.00\% [0.00, 0.00] \\
\bottomrule
\end{tabular}
\end{table}

\begin{table}[t]
\centering\small
\caption{Absolute held-out updates for the prediction-selected and work-selected states. Each row is 64 accepted tasks; all paths use the same tolerance. Corresponding differences and intervals appear in Table~\ref{tab:selection-new-tasks}. Full controls and descriptive CPU timings accompany the raw-data analysis.}
\label{tab:selection-absolute-work}
\begin{tabular}{llclrr}\toprule
Tasks & Grid & Source & Corrector & Prediction-selected & Work-selected\\\midrule
ID & 31 & 0 & CG & 149.297 & 149.297\\
ID & 31 & 0 & Poisson-PCG & 17.109 & 16.984\\
ID & 31 & 1 & CG & 148.953 & 149.469\\
ID & 31 & 1 & Poisson-PCG & 17.047 & 17.031\\
ID & 63 & 0 & CG & 320.656 & 318.156\\
ID & 63 & 0 & Poisson-PCG & 18.359 & 18.031\\
ID & 63 & 1 & CG & 320.891 & 317.562\\
ID & 63 & 1 & Poisson-PCG & 18.391 & 18.016\\
Joint & 31 & 0 & CG & 271.438 & 271.438\\
Joint & 31 & 0 & Poisson-PCG & 52.484 & 52.484\\
Joint & 31 & 1 & CG & 271.781 & 271.938\\
Joint & 31 & 1 & Poisson-PCG & 52.469 & 52.562\\
Joint & 63 & 0 & CG & 594.984 & 594.984\\
Joint & 63 & 0 & Poisson-PCG & 55.766 & 55.766\\
Joint & 63 & 1 & CG & 596.922 & 596.375\\
Joint & 63 & 1 & Poisson-PCG & 55.812 & 55.703\\
\bottomrule\end{tabular}
\end{table}

\FloatBarrier

\section{Complete evidence map and comparison units}
\label{app:evidence-map}
The scope of the study is preserved across the main text, supplementary methods, and the aggregate workbook. An experiment family is not an independent replication of every other family: source states, task identities, budget prefixes, and grid representations are reused where declared. The units below must not be pooled into a single sample-size claim.

\begin{longtable}{p{0.27\linewidth}p{0.30\linewidth}p{0.34\linewidth}}
\toprule
Evidence family & Reported record coverage & Independence and interpretation \\
\midrule\endfirsthead
\toprule Evidence family & Reported record coverage & Independence and interpretation\\\midrule\endhead
Original supplied aggregates & Historical table retained in workbook & Original run-count semantics; not reconstructed into missing per-task weights or pooled with controlled waves.\\
Two core linear waves & 240 source trajectories; 144 adaptation trajectories; 494,592 correction endpoints & Main tasks: 256 plus 128. Budget checkpoints, seeds, grids, and tolerances are dependent.\\
Core timing and spectral records & 39,624 timing repetitions; 41,527 spectral records & Subsets and oracle constructions have their own units. The 41,463 expanded spectral records exclude nine follow-ups after one failed matching gate.\\
Completed deployment suites & 360 units; 16,008 JSON records, including 240 fresh-process records & Query expansion yields 25,584 final-query records, with 142 failures; none of these totals is an independent-task count.\\
Nested source count & 24 new trajectories plus 12 reused states; 3,840 accepted endpoints & One nested source pool; 64 new physical test tasks. Fixed update budget changes exposure per example.\\
Fixed target-information allocation & 108 actual trajectories; 648 fixed-rate checkpoint/policy states; 324 rate-selected states & 128 test tasks per equation, disjoint from selection. Three allocations reuse one 80-label pool per equation.\\
Initial nonlinear study & 6 source states; 9,072 endpoints; 27,216 timing repetitions & 112 tasks over seven strata; grids and seeds do not add tasks.\\
Longer nonlinear recipe & 6 newly trained states; 9,072 endpoints; 27,216 repetitions & Same 112 tasks and source pool, but a stretched learning-rate schedule.\\
Nonlinear confirmation & 4,536 endpoints; 13,608 repetitions & 56 new tasks reuse the longer-trained states; no additional training.\\
Official HINTS mechanism & 750 official cases; 6,750 test endpoints & The original 32 cases are a prefix, not extra samples; native 29-interior-DOF configuration.\\
HINTS warm timing & 480 accepted timed runs & 32 fixed tasks, five paths, three repeats; separate from oracle-diagnostic timing.\\
HINTS boundary lifting & 384 accepted endpoints; 128 bridge configurations & Same 32 tasks, four boundary conditions. No new independent tasks or formal timing claim.\\
Independent batch confirmation & 8 groups; 168 batch records; 5,376 accepted query receipts & 256 new tasks, grouped for timing inference; one fixed source state and node, excluding discovery groups.\\
Fresh-task model selection & 19,968 accepted validation/test endpoints & 64 validation and 128 new test physical tasks; two frozen 12-model libraries; paired grids and correctors.\\
Scaled official NOWS audit & 3 models, each 100 epochs; 96 task/grid units; 9,216 timed records & 48 tasks over three strata and two grids; one source pool. Timing is partitioned by host.\\
\bottomrule
\end{longtable}

The source workbook contains 11 worksheets: overview, original core summaries, complete expanded configurations, matched cross-grid configurations, deployment summaries, paired deployment contrasts, target information, source count, nonlinear results, native methods, and coverage/provenance. Their aggregate blocks accompany the manuscript source; the core and batch raw records and the new model-selection records additionally accompany the executable supplement. Individual rows may be aggregate configurations, repeated-control displays, or provenance entries. They are not automatically additional physical tasks. The exact file inventory, row counts, and input hashes are supplied separately.

\paragraph{Experimental scope.}
The controlled experiments cover static forward problems and supervised target adaptation. The literature map also includes irregular geometries, unstructured meshes, cross-equation transfer, nonmonotone branch selection, long-time evolution, inverse-posterior calibration, and executable-program discovery. Extending the empirical analysis to these settings requires corresponding task definitions and evaluation protocols. A further bridge to coordinate PINNs would compare source-initialized and scratch models under matched target-residual optimization.

\paragraph{Further directions.}
The results motivate work on predicting finite-tolerance correction cost from inexpensive features, accounting for paid probes in selection policies, allocating a fixed label budget between adaptation and validation, developing local-Jacobian estimates for nonlinear error and cost, and quantifying uncertainty under shared source states and tasks.

\section{Executable evidence and verification scope}
\label{app:artifact-scope}
\paragraph{Anonymous artifacts.}
The manuscript source contains all text, figures, tables, 105 bibliographic entries, exact-arithmetic checks, and 74 historical aggregate blocks. The separate executable supplement contains a core reconstruction directory, \texttt{repro/}, and the frozen-library experiment, \texttt{selection\_study/}. Their readmes specify environment versions, commands, task generation, checkpoint locations, timing boundaries, and reconstruction scope. Administrative author paths and machine names are removed or consistently aliased; measured values, task IDs, grouping, and execution order are preserved. The manifest hashes the distributed files.

\paragraph{Reconstructing the main table and batch confirmation.}
The core artifact includes 768 original task-level endpoints: 576 learned-state and 192 matched zero/coarse endpoints across both PDE families, two source libraries, three optimizer seeds, and three correctors per family. Analysis averages seeds within task, then reconstructs the paired-task summaries and 4000-resample intervals used for Table~\ref{tab:core}. It also recomputes initial error/residual/energy summaries and absolute work. The batch artifact preserves all eight new groups, 168 batch records, and 5,376 query receipts, including their run order. It reconstructs three-repeat group medians and paired group-bootstrap comparisons without pooling the two discovery groups. Regression checks compare reconstructed outputs with the original summaries; they are exact-record statistical reconstruction, not an independent new sample.

\paragraph{Executing models and solvers.}
The supplement includes the 48 recovered source-selected checkpoints needed by the original deployment loader, model definitions, deterministic task generators, correction and batch entry points, and source-selection/configuration records. A CPU reevaluation of all 768 core endpoints with the recovered weights passes acceptance and retains the diffusion CG-versus-preconditioned-CG sign pattern. Different PyTorch versions, devices, and numerical libraries do not produce bitwise identical predictions or iteration counts; the supplied validation receipt records these differences. Paper timings remain the original A100/CPU deployment observations. Neither a CPU rerun nor raw-record bootstrap replay reproduces those wall-clock measurements on another machine. The runnable batch entry preserves grouping and order and reports the new machine's costs separately.

\paragraph{New-task selection is fully separated.}
The new study includes its frozen protocol, checkpoint hashes, locked validation choices, all 19,968 validation/test endpoints, environment, deterministic task identifiers, complete analysis, and portable execution entry. Reanalysis checks the full expected Cartesian product, acceptance, and the selected labels. An additional first-acceptance check explicitly recomputes the true physical residual at every CG iterate for all 448 distinct selected $n=63$ test paths: all recorded acceptance counts agree with the first true-residual crossing. This is numerical validation of existing executions, not 448 additional independent tasks. Trained-state ranking denominators and dominance direction are separately checked.

\paragraph{Reproducibility scope.}
The supplement supports reconstruction of the supplied measurements and reevaluation of the included checkpoints. Historical training arrays, optimizer trajectories, and native-method checkpoints outside this package remain documented through protocols and aggregate results. The exact-arithmetic script verifies the stated CG constructions, while numerical acceptance checks apply to the specified discrete systems. Retraining and independent replication require the corresponding training data, execution environments, and experimental protocols.

\end{document}